\theoremstyle{plain}
\newtheorem{theorem}{Theorem}[section]
\newtheorem{proposition}[theorem]{Proposition}
\newtheorem{lemma}[theorem]{Lemma}
\theoremstyle{definition}
\theoremstyle{remark}
\icmltitlerunning{GenIAS: Generator for Instantiating Anomalies in time Series}
\begin{document}

\twocolumn[
  \icmltitle{GenIAS: Generator for Instantiating Anomalies in Time Series}



  \icmlsetsymbol{equal}{*}

  \begin{icmlauthorlist}
    \icmlauthor{Zahra Zamanzadeh Darban}{equal,yyy}
    \icmlauthor{Mike Wang}{comp}
    \icmlauthor{Geoffrey~I. Webb}{yyy}
    \icmlauthor{Shirui Pan}{sch}
    \icmlauthor{Charu C. Aggarwal}{iii}
    \icmlauthor{Mahsa Salehi}{yyy}
  \end{icmlauthorlist}

  \icmlaffiliation{yyy}{Monash University, Melbourne, Victoria, Australia}
  \icmlaffiliation{iii}{IBM T. J. Watson Research Center, Yorktown Heights, NY, USA}
  \icmlaffiliation{sch}{Griffith University, Gold Coast, Queensland, Australia}
  \icmlaffiliation{comp}{The University of Melbourne, Parkville, Victoria, Australia}

  \icmlcorrespondingauthor{Zahra Zamanzadeh Darban}{zahra.zamanzadeh@monash.edu}

  \icmlkeywords{Machine Learning, ICML}

  \vskip 0.3in
]



\printAffiliationsAndNotice{}  


\begin{abstract}
Synthetic anomaly injection is a recent and promising approach for time series anomaly detection (TSAD), but existing methods rely on ad hoc, hand-crafted strategies applied to raw time series that fail to capture diverse and complex anomalous patterns, particularly in multivariate settings. We propose a synthetic anomaly generation method named \underline{Gen}erator for \underline{I}nstantiating \underline{A}nomalies in Time \underline{S}eries ({\bf GenIAS}), which generates realistic and diverse anomalies via a novel learnable perturbation in the latent space of a variational autoencoder. This enables abnormal patterns to be injected across different temporal segments at varying scales based on variational reparameterization. To generate anomalies that align with normal patterns while remaining distinguishable, we introduce a learning strategy that jointly learns the perturbation scale and compact latent representations via a tunable prior, which improves the distinguishability of generated anomalies, as supported by our theoretical analysis.
Extensive experiments show that GenIAS produces more diverse and realistic anomalies, and that detection models trained with these anomalies outperform 17 baseline methods on 9 popular TSAD benchmarks. 

\end{abstract}

\section{Introduction}
Unsupervised TSAD methods often learn the boundary of normal data for detection without labeled anomalies, making them vulnerable to diverse or subtle anomaly patterns. This is particularly true in complex multivariate scenarios, where anomalous patterns can vary significantly across dimensions or involve intricate correlations.

Anomaly injection techniques are gaining popularity for enhancing normal-boundary learning by addressing the lack of labels. Models like CARLA~\cite{DARBAN2025carla}, CutAddPaste~\cite{wang2024cutaddpaste}, NCAD~\cite{ncad2022}, and COUTA~\cite{Calibrated} show the promise of synthetic anomaly injection in improving detection, but each faces notable limitations. For instance, CARLA, CutAddPaste, and COUTA are limited to predefined anomaly types, such as point, seasonal, and shapelet anomalies. CutAddPaste employs a fixed “cut-add-paste” augmentation strategy, which may produce unrealistic patterns of time series. NCAD relies on prior knowledge of anomalies to generate realistic anomalies, but the diversity of its generated anomalies is constrained by the types of anomalies present in the training data.
Moreover, most existing anomaly injection models operate directly in raw time series spaces, without modelling their normal patterns. 
This can result in unrealistic alignment between injected anomalies and the original time series components, producing anomalies that are out of context or entangled with normal patterns, and thus failing to provide anomaly discriminative information for TSAD training.

We address these gaps by introducing GenIAS, a novel generative model designed to produce realistic and diverse synthetic anomalies for time series data. In contrast to existing methods that operate directly on raw time series, GenIAS performs anomaly generation in the latent space using a hybrid Temporal Convolutional Network–Variational Autoencoder (TCN-VAE) architecture. Through variational reparameterization, this design encodes the temporal and inter-variable dependencies of normal patterns in multivariate time series within a continuous latent space, enabling fine-grained and controllable perturbations for generating diverse and realistic anomalies. This is achieved by our novel learnable perturbation mechanism in the learned latent space, which modulates latent variance to induce anomalies with varying degrees of deviation across different time series components.

To produce anomalies that align with the temporal properties of the original time series while remaining distinguishable from normal patterns, we introduce a joint training strategy that learns both the latent distribution of normal time series and the perturbation mechanism by enforcing two constraints. First, we propose a perturbation loss that encourages deviations of reasonable magnitude, avoiding perturbations that are either too trivial or excessively large. Second, we introduce a compact loss with a tunable variance prior initialized to a small value (e.g., $<1$) to induce compact latent representations in VAEs, as illustrated in Fig.~\ref{fig:genias_idea}. Our theoretical analysis shows that such compactness reduces overlap between the generated anomalies and normal patterns when applying the proposed learnable perturbation.

GenIAS injects anomalies through two stages, where a learned perturbation is first applied to the latent variance of normal time series representations to produce anomalous latent samples that are decoded into anomalous windows. To better reflect realistic TSAD scenarios, where anomalies are often localized rather than spanning all dimensions, a deviation-based patching strategy is then applied to selectively inject anomalous components from the anomalous window into the original window on a per-dimension basis, while preserving normal context. To evaluate anomaly generation quality, we introduce two complementary representation-based metrics that assess realism, defined as proximity to real anomalies in representation space, and diversity of the generated anomalies. GenIAS achieves a better balance in anomaly generation quality, and detection models trained with its generated anomalies consistently outperform 17 baselines across nine univariate and multivariate datasets.

Our key contributions are as follows:


\begin{figure}[t]
    \centering
    \includegraphics[width=0.6\linewidth]{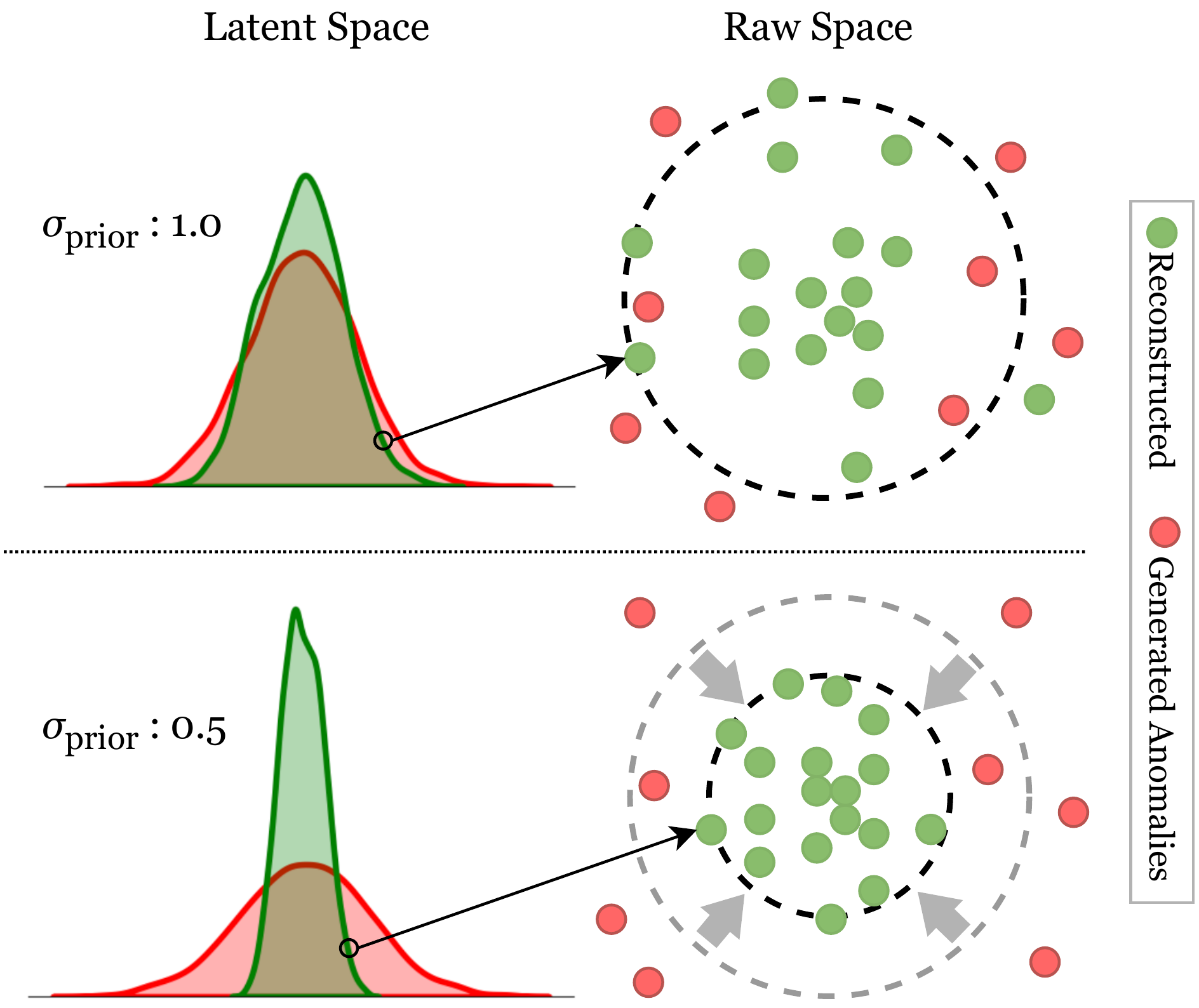}
    \caption{Effect of compactness. A smaller prior $(\sigma_{\text{prior}}=0.5$ vs $\sigma_{\text{prior}}=1.0)$ variance produces compact normal latent representations and improved separation from generated anomalies }
    \label{fig:genias_idea}
    \vspace{-8pt}
\end{figure}

\begin{itemize}
    \item We propose a novel synthetic anomaly generation method, GenIAS, that operates in the latent space of variational autoencoders and generates diverse anomalies with varying degrees of deviation across different time series components, validating anomaly injection via controllable latent space perturbation as a promising strategy for improving unsupervised TSAD.
    \item We introduce an optimization strategy for GenIAS that jointly learns the latent distribution of time series data and the perturbation mechanism. This ensures that injected deviations are informative and distinguishable while remaining coherent with the normal time series.
    \item We propose two metrics for evaluating the diversity and realism for measuring anomaly generation, where GenIAS demonstrates better and more balanced results compared to the synthetic anomaly generation methods. We further show that GenIAS-enabled detection models demonstrate superior performance compared to 17 baseline methods, translating improved anomaly generation into stronger TSAD performance.
\end{itemize} 

\vspace{-20pt}

\section{Related Work}
The field of TSAD has advanced significantly, spanning traditional statistical methods to SOTA deep learning approaches~\cite{schmidl2022anomaly,audibert2022deep,darban2024deep}. Early efforts included models like One-Class SVM (OC-SVM)~\cite{scholkopf1999support}, LOF~\cite{breunig2000lof}, and Isolation Forest~\cite{liu2008isolation}, which struggled with handling high-dimensional and sequential data. In contrast, deep learning models like Donut~\cite{xu2018donut}, LSTM-VAE~\cite{park2018lstmvae} and OmniAnomaly~\cite{su2019robust} introduced temporal dependency modeling, advancing TSAD performance.

\sloppy
Representation-based methods have pushed TSAD forward by focusing on meaningful embeddings. TS2Vec~\cite{yue2022ts2vec} applies self-supervised learning to capture multi-level semantic features, while DCdetector~\cite{yang2023dcdetector} employs dual attention for permutation-invariant representations. Recent models such as TimesNet~\cite{wu2023timesnet} and TranAD~\cite{tuli2022tranad} leverage attention mechanisms to capture complex dependencies. Methods like MTAD-GAT~\cite{zhao2020mtad} utilize graph attention networks to model multivariate dependencies, and THOC~\cite{shen2020thoc} incorporates hierarchical modelling to capture temporal patterns across multiple scales. Anomaly Transformer~\cite{xu2021anomalytran} leverages anomaly-sensitive features through attention mechanisms.

OE~\cite{hendrycks2018OE} improves robustness by introducing out-of-distribution examples during training. NCAD~\cite{ncad2022} extends this idea to time series by injecting mixed contextual and random point anomalies to better separate normal and anomalous patterns.
Anomaly injection and perturbation methods have become key strategies in TSAD, with methods like CutAddPaste~\cite{wang2024cutaddpaste} and COUTA~\cite{Calibrated} simulating diverse anomaly patterns during training. These techniques are also used in CARLA~\cite{DARBAN2025carla} to create negative pairs for contrastive learning, enhancing generalization to unseen anomalies.

\begin{figure*}[t]
\centering
\includegraphics[width=1\linewidth]{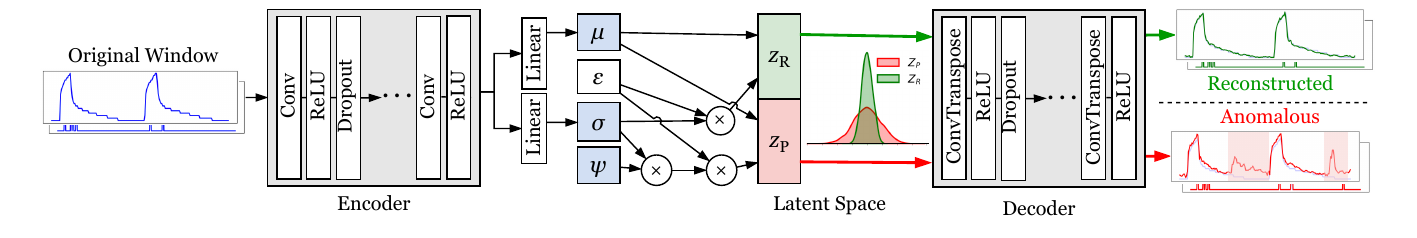}
    \caption{Overall architecture of GenIAS: The model comprises an encoder, a structured latent space, and a decoder. The encoder, based on a TCN-VAE, maps input time series windows into a latent representation by learning temporal dependencies. Latent space regularization encourages compact normal representations, and anomalies are generated by perturbing the variance of the learned distribution. The decoder reconstructs both normal and perturbed windows using transpose convolutional layers.}
    \label{fig:genias_arc}
        \vspace{-10pt}
\end{figure*}

\section{GenIAS Method}
\noindent\textbf{Problem Definition.}
Let $\mathcal{D}$ be a time series dataset consisting of $N$ windows, where each window $\mathbf{X} \in \mathbb{R}^{T \times D}$ has length $T$ and $D$ variables, drawn from an unknown data distribution $p(\mathbf{X})$.
In typical TSAD settings, $\mathcal{D}$ is dominated by normal samples, while anomalies are rare. The objective of  anomaly generation is to learn a mapping
$\Gamma: \mathbf{X} \rightarrow \widetilde{\mathbf{X}}$
that transforms a normal window $\mathbf{X}$ into an anomalous window
$\widetilde{\mathbf{X}} \in \mathbb{R}^{T \times D}$.
The generated anomalies should preserve the local temporal context of $\mathbf{X}$ while introducing controlled deviations that make $\widetilde{\mathbf{X}}$ distinguishable from normal patterns and informative for training TSAD models.


For anomaly generation via latent space perturbation, we identify two key considerations for generating anomaly-discriminative samples. The first is to design a latent space perturbation mechanism for \(\mathbf{z} \in \mathbb{R}^{L}\), parameterized by \(p(\mathbf{z} \mid \mathbf{X})\), that injects controlled deviations into the latent representation, which serves as the input to the decoder for construct anomalous windows, to generate anomalies that satisfy the above properties. The second is to densify normal representations in latent space, which reduces overlap between normal samples and generated anomalies. This is applicable to both univariate and multivariate settings. 


\medskip

\subsection{Anomaly Generation using GenIAS}\label{sec:anom_generation}

\noindent\textbf{GenIAS Overview.}
The core of GenIAS is a novel learnable perturbation mechanism operating on a variational latent space that is explicitly learned through a tailored training strategy, where distributions of normal time series data and the perturbation mechanism are jointly optimized. This joint optimization is motivated by the above design considerations, ensuring that the learned perturbations align well with the context of the normal time series distribution while being informative for anomaly generation.

\noindent\textbf{Generation Workflow.} As shown in Figure \ref{fig:genias_arc}, GenIAS adopts an encoder-decoder architecture, where a Temporal Convolutional Network (TCN) is used as the backbone to extract complex temporal patterns and dependencies through stacked convolutional layers interleaved with nonlinear activations and dropout. A variational autoencoder is then used to model a continuous latent Gaussian distribution $\mathcal{N}(\mu, \sigma^2)$, where $\mu$ and $\sigma^2$ are $L$-dimensional vectors capturing the central tendency (mean) and spread (variance) of normal patterns, respectively, providing a basis for controlled anomaly generation. A normal window can be generated by first sampling a latent vector $\mathbf{z}=\mu + (\sigma \odot \epsilon)$, which is then passed to the decoder to reconstruct the corresponding time series window.

\noindent\textbf{Anomaly Injection Pipeline.}\label{sec:anom_gen}
GenIAS employs a two-stage pipeline for anomaly generation, consisting of anomalous window generation and deviation-based patching. Learnable perturbations are applied during latent sampling of a trained VAE to generate anomalous latent vectors (Eq.\ref{eq:z_perturbed}), which are decoded into anomalous windows (detailed in Algo.~\ref{alg:genias}).

While the reconstructed anomalous window captures coherent abnormal patterns, directly replacing the full window can be insufficient or overly aggressive for TSAD, as real anomalies are often localized. We therefore apply a deviation-based patching strategy that selectively replaces entire dimensions from the normal trajectory with their anomalous counterparts only when a non-trivial deviation is detected.
The patching mechanism is detailed in Sec.\ref{sec:patching}.

\noindent\textbf{Deviation via Variance Perturbation.} 
Although VAEs are commonly used for modelling and generating normal data, we exploit their variational latent distributions to generate anomalies by adjusting the variance around the latent mean to produce desired deviations. We propose perturbing the variance of the learned VAE instead of the mean so that the generated anomalies follow the temporal property and central tendency of the times series. To generate anomalous representations $ \tilde{\mathbf{z}} $, the standard deviation $\sigma$ is perturbed while keeping $\mu$ fixed:
{\footnotesize
\begin{equation}
\vspace{-3pt}
\tilde{\mathbf{z}} = \mu + \psi \cdot (\sigma \odot \epsilon),
\label{eq:z_perturbed}
\vspace{-2pt}
\end{equation}}
where $ \psi $ is a learned perturbation scale, and $ \epsilon \sim \mathcal{N}(\mathbf{0}, \mathbf{I}) $ is a noise vector. We find that perturbing the latent mean can introduce large deviations, but it violates the central tendency, often exhibiting out of context temporal properties that are less informative for representing realistic anomalies.
\subsection{GenIAS's Joint Learning}
We jointly learn the perturbation scale and the time series latent distribution to ensure compatibility. In addition to the reconstruction loss for encoding the context of time series (Eq. \ref{eq:reconstruction_loss}), we introduce two constraints to tighten the variance of the normal latent distribution and to regularize an effective perturbation scale. Specifically, we enforce a compact normal manifold through a compact loss (Eq. \ref{eq:kl_divergence_loss}) to reduce overlap between generated anomalies and normal data, and calibrate the perturbation scale to produce context-aligned deviations using a perturbation loss (Eq. \ref{eq:perturbation_loss}). Intuitively, effective synthetic anomalies should deviate from normal patterns without leaving the local context, since overly subtle deviations label near normal samples as anomalies and introduce label noise, while overly large deviations often become out of context and uninformative.

\noindent\textbf{Compact KL Divergence for Distinguishability.} The standard KL divergence loss is mainly used to regularize the latent space by aligning the approximate posterior distribution $ q(z|\mathbf{X}) $, parameterized by  $\mu$ and $\sigma^2$, with a standard Gaussian prior $p(z)$, defined as $\mathcal{N}(0, 1)$. However, this does not explicitly control the spread of the normal latent distribution, which can become overly diffuse and increase overlap between generated anomalies and normal samples. We address this by introducing a compact KL divergence loss with a tunable prior variance $ \sigma_{\text{prior}}^2 $:
{\footnotesize
\begin{equation}
\mathcal{L}_{\text{comp-KL}} = -\frac{1}{2} \sum_{i=1}^{|\mathbf{\mathcal{D}}|} \sum_{j=1}^{L} \Big[ 1 + \log(\sigma^2_{ij}) - \mu_{ij}^2 - \frac{\sigma^2_{ij}}{\sigma_{\text{prior}}^2} + 2 \log(\sigma_{\text{prior}}) \Big],
\label{eq:kl_divergence_loss}
\end{equation}
}
Where $\mu_{ij}$ and $\sigma^2_{ij}$ are the mean and variance of the latent distribution for the $j$-th dimension of the $i$-th sample, and $\sigma_{\text{prior}} < 1$ is the prior standard deviation.
We show in Lemma \ref{lemma:compact} that introducing the tunable prior variance $\sigma_{\text{prior}}$ leads to tighter latent representations. 
\begin{lemma}[Compact Latent Space in Compact KL Loss] \label{lemma:compact}
Consider the KL divergence between the estimated posterior and the prior distributions for a latent dimension $j$. If the compact KL loss (Eq. ~\ref{eq:kl_divergence_loss}) is minimized (or its gradient dominates locally) with a prior variance $\sigma^2_{\text{prior}} < 1$, then for each latent variable $z_j$ the optimal posterior variance satisfies $\sigma^2_j = \sigma^2_{\text{prior}}$. Consequently, the posterior variance of each latent variable is driven toward the prior variance $\sigma^2_{\text{prior}}$, thereby imposing a compact spread in the latent space.
\end{lemma}

The full proof is provided in \ref{lemma:compact_full}. We now show in the following proposition that limiting the latent variance improves the distinguishability between generated anomalous and normal latent distributions under variance scaling perturbations.

\begin{proposition}[Compactness leads to greater separation in the latent space] \label{theorem:1}
Let a VAE be trained on normal data with a latent prior \( \mathcal{N}(0, \sigma^2_{\text{prior}}) \), where \( \sigma_{\text{prior}} < 1 \). The encoder posterior for normal samples follows \( \mathcal{N}(\mu, \sigma^2_{\text{normal}}) \), with compactness enforced by KL regularization. If the encoder \( \phi \) applies a perturbation scale \( \psi \), inflating variance as  
\(
\sigma^2_{\text{anom}} = \psi \sigma^2_{\text{normal}}, \quad \text{with } \psi > 1,
\)
then the KL divergence between normal and anomalous latent distributions strictly increases compared to the case where \( \sigma_{\text{prior}} \geq 1 \).
\end{proposition}
The proof is provided in \ref{app:proof_t_1}. The increased KL divergence under lower variance implies statistically improved distinguishability. We also illustrate in Fig.~\ref{fig:mse_hist} in the Appendix how our compact KL loss leads to the separation of reconstruction error in the raw space.

\noindent\textbf{Perturbation Loss for Informative Deviation.}\label{sec:pert_loss}
We define a perturbation loss to control the magnitude of deviation introduced by latent perturbations composed of two complementary terms. 

The first term is a {\it triplet-style margin loss} that enforces a minimum separation between the original sample $\mathbf{X}$ and the generated anomalous sample $\widetilde{\mathbf{X}}$, relative to the reconstructed normal sample $\hat{\mathbf{X}}$. This term ensures that generated anomalies deviate sufficiently from the original time series by pushing anomalous windows away from normal windows in the raw time series space with a minimum margin \(\delta_{\text{min}}\).


The second term acts as a {\it regularizer} that prevents the anomalous windows \( \tilde{\mathbf{X}} \) from deviating excessively from their original samples, thereby preserving realism. It penalizes deviations where distance \( d(\mathbf{X}, \tilde{\mathbf{X}}) \) exceeds a maximum allowable threshold \(\delta_{\text{max}}\), ensuring perturbations remain within a plausible range.
The perturbation loss is formulated as:
{\footnotesize
\begin{align}
\mathcal{L}_{\text{perturb}} =\; & \frac{1}{|\mathcal{D}|} \sum_{\mathbf{X} \in \mathcal{D}} \Big[ \max\big( d(\mathbf{X}, \hat{\mathbf{X}}) - d(\mathbf{X}, \tilde{\mathbf{X}}) + \delta_{\text{min}}, 0 \big) \Big] \nonumber \\
& + \frac{1}{|\mathcal{D}|} \sum_{\mathbf{X} \in \mathcal{D}} \max\big( d(\mathbf{X}, \tilde{\mathbf{X}}) - \delta_{\text{max}}, 0 \big)
\label{eq:perturbation_loss}
\end{align}
}
These terms constrain the perturbation magnitude to avoid overly obvious deviations while ensuring anomalies remain distinguishable.

\noindent\textbf{Zero-Perturbation Loss for Zero-valued 
Dimensions.}\label{sec:zero_pert_loss} 
Time series windows may contain dimensions that are entirely zero, corresponding to inactive or sparse signals. Directly applying perturbations to such dimensions can result in unrealistic anomalies. We introduce a zero-perturbation loss that regularizes deviations in zero-valued dimensions. Specifically, the loss penalizes large deviations in dimensions where the original signal is zero, encouraging perturbations to remain small and plausible while preserving the structural characteristics of these dimensions. The loss is defined as:
{\footnotesize
\begin{equation}
\mathcal{L}_{\text{zero-perturb}} = \frac{1}{|\mathcal{D}|} \sum_{\mathbf{X} \in \mathcal{D}}\big(\frac{1}{T \cdot |\mathcal{Z}|} \sum_{t=1}^{T} \sum_{d \in \mathcal{Z}} \big( d(\mathbf{X}, \tilde{\mathbf{X}}) + 1)^{-1}\big),
\label{eq:zero_perturbation_loss}
\end{equation}
}
where $\mathcal{Z}$ denotes the set of zero-valued dimensions in $\mathbf{X}$. This term discourages unrealistic perturbations in zero-valued channels while allowing controlled deviations when appropriate.

\noindent\textbf{Reconstruction Loss for Learning Normal.}
\label{sec:rec_loss}
The reconstruction loss  is widely used for learning latent representation and is defined as Mean Squared Error (MSE) between an input sample \( \mathbf{X} \) and its reconstruction \( \hat{\mathbf{X}} \):
{\footnotesize
\begin{equation}
\mathcal{L}_{\text{recon}} = \frac{1}{|\mathcal{D}|} \sum_{\mathbf{X} \in \mathcal{D}} \frac{1}{T \cdot D} \sum_{t=1}^{T} \sum_{d=1}^{D} \big( \mathbf{X}[t, d] - \hat{\mathbf{X}}[t, d] \big)^2,
\label{eq:reconstruction_loss}
\end{equation}
}
which is included to capture the temporal properties of normal samples as the basis for perturbation.

\noindent\textbf{Overall Objective.}
\label{sec:total_loss}
The overall objective for GenIAS joint learning can be defined as:
{\footnotesize
\begin{equation}
\mathcal{L}_{\text{total}} = \alpha \cdot \mathcal{L}_{\text{recon}} 
+ \beta \cdot \mathcal{L}_{\text{perturb}} 
+ \gamma \cdot \mathcal{L}_{\text{zero-perturb}} 
+ \zeta \cdot \mathcal{L}_{\text{comp-KL}},
\label{eq:total_loss}
\end{equation}}
where \( \alpha \), \( \beta \), \( \gamma \), and \( \zeta \) are the coefficients for the reconstruction, perturbation, zero-perturbation, and KL divergence losses, respectively.

\begin{figure}[t]
    \centering
    \begin{subfigure}[b]{0.22\textwidth}
        \centering
        \includegraphics[width=\textwidth]{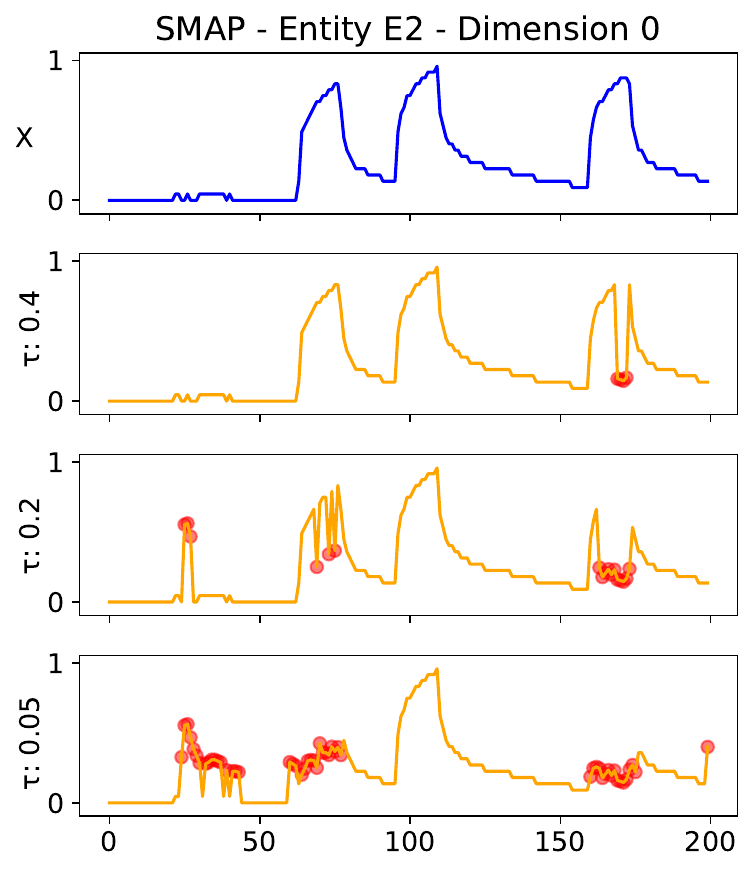}
        \label{fig:smap-e2}
    \end{subfigure}
    \begin{subfigure}{0.212\textwidth}
    \centering
        \includegraphics[width=\textwidth]{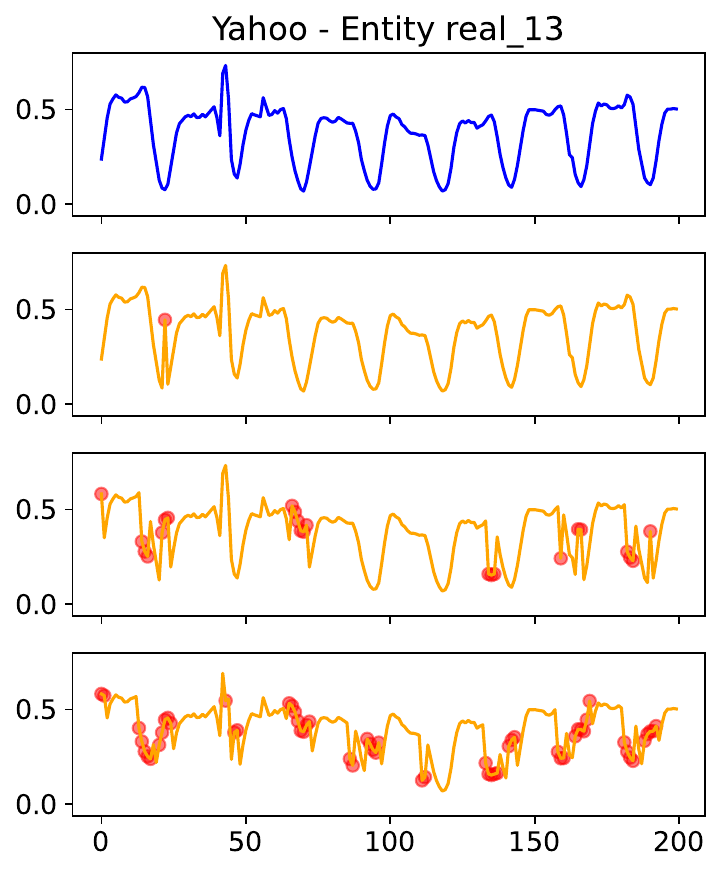}
        \label{fig:yahoo-r13}
    \end{subfigure}
    \caption{Effect of deviation-based patching thresholds on generated anomalies across SMAP and Yahoo datasets.}
    \label{fig:patching}
\end{figure}

\subsection{Deviation Patching Approach} \label{sec:patching}
As discussed in Sec.\ref{sec:anom_gen}, directly using reconstructed anomalous windows has practical limitations. We therefore introduce a deviation-based patching strategy to selectively inject anomalous components while preserving normal context. Specifically, patching is performed independently for each dimension $d$ according to
$\mathbf{X}_{\text{patched},d} = \widetilde{\mathbf{X}}_d$ if $\lVert \mathbf{X}_d - \widetilde{\mathbf{X}}_d \rVert^2 > \tau \cdot (\max(\mathbf{X}_d) - \min(\mathbf{X}_d))$, otherwise $\mathbf{X}_d$, where $\tau$ is a scaling factor controlling the sensitivity of anomaly injection. This strategy replaces entire variable trajectories only when a non-trivial deviation is detected, preserving normal patterns in unaffected dimensions. We show its effect in Fig.~\ref{fig:patching}. The complete procedure is summarized in Algo \ref{alg:patching}.  


\section{Experiments}
We extensively evaluate GenIAS using a diverse set of datasets and baselines in two parts: (1) Generation Evaluation, comparing GenIAS with other anomaly injection models to assess its capability to produce realistic and diverse anomalies; and (2) Case Study – TSAD Evaluation, benchmarking GenIAS against seventeen competing methods using nine real-world MTS and UTS datasets. To assess its impact on TSAD, we replace CARLA’s anomaly generator with GenIAS while keeping the rest of the architecture unchanged. CARLA is selected as the base detector for its SOTA performance, enabling fair comparison with the strongest injection-based method. This comprehensive evaluation allows for a detailed analysis of GenIAS's effectiveness in both anomaly generation and TSAD.

\begin{table}[t]
\centering
\caption{An overview of dataset statistics.}
\label{tab:dset}
\begin{adjustbox}{width=\columnwidth}
\begin{tabular}{c|cccccc}
\hline
\textbf{Benchmark} & \textbf{\# datasets} & \textbf{\# dims} & \textbf{Train size} & \textbf{Test size} & \textbf{AR\%} \\ \hline
MSL~\citep{hundman2018detecting} & 27 & 55 & 58,317  & 73,729  & 10.72\% \\
SMAP~\citep{hundman2018detecting} & 55 & 25 & 140,825 & 444,035 & 13.13\% \\
SMD~\citep{su2019robust} & 28 & 38 & 708,405 & 708,420 & 4.16\%  \\
SWaT~\citep{mathur2016swat} & 1 & 51 & 496,800 & 449,919 & 12.33\%  \\
GECCO~\citep{moritz2018gecco} & 1 & 9 & 69,260 & 69,261 & 1.1\%  \\
SWAN~\citep{DVN/EBCFKM_2020} & 1 & 38 & 60,000 & 60,000 & 32.6\%  \\
UCR~\citep{wu2021current} & 250 & 1  & 2,238,349 & 6,143,541 & 0.6\%  \\
Yahoo-A1~\citep{yahoods} & 67 & 1  & 46,667  & 46,717  & 1.76\%  \\
KPI~\citep{aiops_challenge_2018} & 29 & 1  & 1,048,576 & 2,918,847 & 1.87\%  \\ \hline
\end{tabular}%
\end{adjustbox}
\vspace{-10pt}
\end{table}
\subsection{Benchmark Datasets} \label{sec:datasets}
GenIAS is thoroughly evaluated on nine widely used real-world TSAD benchmark datasets, with key statistics reported in Table~\ref{tab:dset}. The selection includes six MTS and three UTS datasets, covering diverse anomaly types and ratios to ensure a comprehensive comparison against baselines. Detailed dataset descriptions are provided in Appendix~\ref{app:dataset}.

\subsection{Baselines} \label{sec:baselines}
GenIAS is compared with seventeen competitive baseline methods that leverage a wide range of anomaly detection models, which can be categorized into four broad categories: classic anomaly detection methods, positive-unlabeled (P-U) methods, unsupervised deep TSAD methods that do not leverage anomaly generation (US-WI), and unsupervised TSAD methods that employ anomaly generation (US-I).

Specifically, three classic general AD methods--- OCSVM~\cite{scholkopf1999support}, LOF~\cite{breunig2000lof}, and Isolation Forest~\cite{liu2008isolation}---to demonstrate the advantages of deep learning TSAD approaches. LSTM-VAE~\cite{park2018lstmvae} is included to assess the impact of incorporating labeled data in TSAD.
US-WI includes a range of unsupervised TSAD baselines that do not use anomaly generation. Methods such as Donut~\cite{xu2018donut}, OmniAnomaly~\cite{su2019robust}, AnomalyTransformer~\cite{xu2021anomalytran}, and TranAD~\cite{Tuli2022TranADDT} rely on reconstruction error for anomaly detection. THOC~\cite{shen2020timeseries} and TimesNet~\cite{wu2023timesnet} approach TSAD as a forecasting task, while MTAD-GAT~\cite{zhao2020mtad} incorporates graph attention to model temporal and multivariate dependencies. Representation learning methods like TS2Vec~\cite{yue2022ts2vec} and DCdetector~\cite{yang2023dcdetector} aim to learn robust, anomaly-discriminative embeddings.
US-I, the most relevant category for GenIAS, includes unsupervised TSAD methods that use anomaly generation. We compare GenIAS with four recent models in this category: NCAD~\cite{ncad2022}, CutAddPaste~\cite{wang2024cutaddpaste}, COUTA~\cite{Calibrated}, and CARLA~\cite{DARBAN2025carla}.

\subsection{Evaluation Metrics}
\label{sec:metrics}
We adopt seven metrics to evaluate GenIAS's performance in terms of generation quality and TSAD performance.

\noindent \textbf{Generation quality.} 
To quantitatively examine anomaly generation quality, we introduce two key metrics: \textit{Anomalous Representation Proximity (ARP)}, which assesses realism, and \textit{Entropy-Based Diversity Index (EDI)}, which characterizes the diversity, motivated by the idea in AnomalyDiffusion~\cite{hu2024anomalydiffusion}. Due to space constraints, definition and further discussion are in Appendix~\ref{app:eval_metrics}. In general, ARP is the primary metric, reflecting how closely generated anomalies resemble actual ones, while EDI is complementary to ARP by assessing diversity only when realism is satisfied.

\noindent\textbf{TSAD evaluation.} We consider five widely-used, complementary metrics to evaluate TSAD performance: F1 score, AUPR (Area Under the Precision-Recall Curve), Affiliation F1 ~\cite{huet2022local}, PATE ~\cite{Pate2024}, and AUROC (Area Under the Receiver Operating Characteristic Curve). A combination of metrics offers a comprehensive, fair, and robust assessment. See Appendix~\ref{app:tsadmetrics} for details.

\subsection{Implementation Details}
Experiments are run on a server with an NVIDIA A40 GPU, 13 CPU cores, and 250GB RAM. We adopt the unsupervised TSAD architecture and training scheme from CARLA as the default model for GenIAS-enhanced detection. GenIAS is implemented in Python using PyTorch and trained using the Adam optimizer with a learning rate of $10^{-4}$ for 1000 epochs. The default window size $T$ is set to 200, and the latent dimension size $L$ is set to 100 and 50, for MTS and UTS, respectively. The hyperparameters for the total loss are $\alpha=1.0$, $\beta=0.1$, $\gamma=0.0/0.01$ (for UTS/MTS), and $\zeta=0.1$, which are fixed for all datasets. We use baseline results from their original implementations with default hyperparameters. Detailed implementation and hyperparameter settings are provided in Appendix~\ref{sec:implementation}.

\subsection{Generation Performance}
\begin{table}[t]
\small
\caption{Evaluation results of anomaly generation quality for GenIAS and US-I baselines on the ARP metric, with the best in bold and second-best underlined (CAP: CutAddPaste).}
\centering
\begin{adjustbox}{width=0.9\columnwidth}
\begin{tabular}{ccccccc}
\toprule
Metric                & Dataset & COUTA               & NCAD            & CAP             & CARLA           & GenIAS          \\ \midrule
\multirow{10}{*}{ARP} & MSL     & \textbf{0.3613}              & 0.3568          & 0.3406          & 0.3586          & \underline{ 0.3605}    \\
                      & SMAP    & 0.7573              & 0.8052          & \underline{ 0.8110}    & 0.7636          & \textbf{0.8113} \\
                      & SMD     & \underline{ 0.0951}        & 0.0810          & 0.0896          & \textbf{0.0958} & 0.0922          \\
                      & SWaT    & 0.9970              & 0.9964          & 0.9963          & \textbf{0.9980} & \underline{ 0.9974}    \\
                      & GECCO   & \underline{ 0.1296}        & 0.1145          & \textbf{0.1335} & 0.1163          & 0.1265          \\
                      & SWAN    & 0.2400              & 0.2362          & 0.2313          & \underline{ 0.2410}    & \textbf{0.2415} \\
                      & Yahoo   & 0.6541              & 0.6139          & 0.6476          & \underline{ 0.6632}    & \textbf{0.6756} \\
                      & KPI     & 0.6856              & \textbf{0.8427} & 0.7207          & 0.7582          & \underline{ 0.7907}    \\
                      & UCR     & 0.1998              & 0.2059          & 0.1936          & \underline{ 0.2086}    & \textbf{0.2161} \\ \cmidrule(lr){2-7}
                      & Rank    & 3.11                & 3.78            & 3.89            & \underline{ 2.44}      & \textbf{1.78}  \\  \cmidrule(lr){1-7}
\multirow{10}{*}{EDI} & MSL     & \underline{ 0.8251}        & 0.7311          & 0.7169          & 0.5822          & \textbf{0.8833} \\
                      & SMAP    & \underline{ 0.8557}        & 0.6699          & 0.6364          & 0.6335          & \textbf{0.8932} \\
                      & SMD     & \underline{ 0.8782}        & 0.7648          & 0.6404          & 0.8041          & \textbf{0.9262} \\
                      & SWaT    & 0.6425              & 0.6164          & \textbf{0.7967}          & 0.6005          & \underline{ 0.7259}    \\
                      & GECCO   & 0.7379              & 0.5571          & 0.7072          & \textbf{0.9327}          & \underline{ 0.8303}    \\
                      & SWAN    & \underline{ 0.9589}        & 0.9189          & 0.7428          & 0.9286          & \textbf{0.9840} \\
                      & Yahoo   & \underline{ 0.8627}        & 0.7202          & 0.7577          & 0.5948          & \textbf{0.9182} \\
                      & KPI     & \underline{ 0.7607}        & 0.6166          & 0.7505          & 0.6962          & \textbf{0.8442} \\
                      & UCR     & \underline{ 0.8166}        & 0.7321          & 0.7536          & 0.7483          & \textbf{0.8961} \\ \cmidrule(lr){2-7}
                      & Rank    & \underline{ 2.22} & 4.11            & 3.56            & 3.89            & \textbf{1.22}   \\  
                      \bottomrule
\end{tabular}
\end{adjustbox}
\label{table:arp}
\vspace{-0.5cm}
\end{table}
We first examine the anomaly generation quality of GenIAS and the competing methods that employ anomaly injection (i.e., the US-I category, see section \ref{sec:baselines}) in terms of ARP and EDI (Section \ref{sec:metrics}). We report the results of ARP and EDI in Table \ref{table:arp}. GenIAS achieves the highest average ARP ranking, suggesting it generates, overall, the most realistic anomalies across all datasets. The realism of the anomaly distribution is essential for providing anomaly-informed supervision in the absence of labeled anomalies. Similarly, for EDI, GenIAS also achieves the highest average ranking, indicating good diversity in its generated anomalies while sufficiently illustrating the actual anomalies. This diversity implies that the generated anomalies can also lie beyond the original anomaly distribution, which can be beneficial for learning a stricter normal class boundary, further improving TSAD performance, provided that they do not interfere with the identification of the normal region. To achieve this, GenIAS treats creating informative but not excessive diversity as a key consideration through its novel perturbation loss.

\subsection{Visualization} \label{sec:visual}
\begin{figure*}[t]
\centering
    \includegraphics[width=0.90\linewidth]{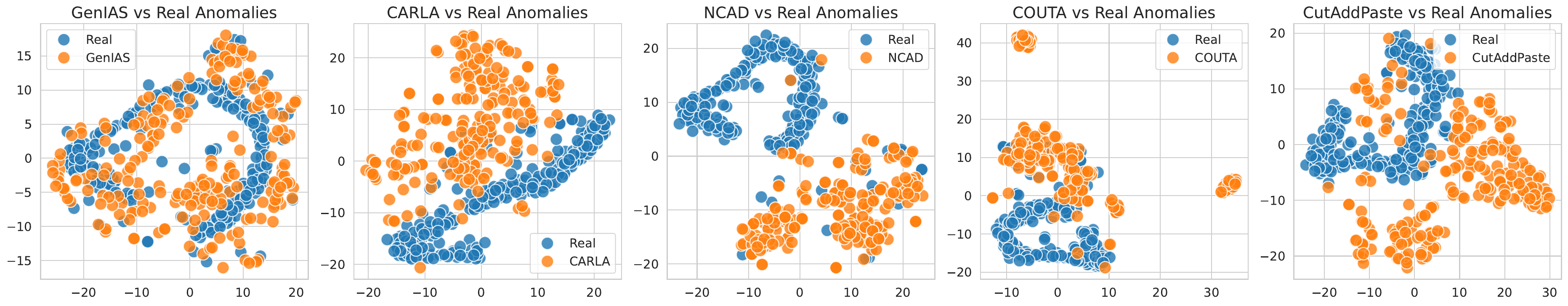}
    \caption{Time series injection methods on the Yahoo dataset. Real anomalies are shown in blue, and synthetic anomalies in orange.}
    \label{fig:yahoo_tsne}
    \label{fig:tsne}
\end{figure*}

\begin{table*}[t]
\vspace{-3pt}
\caption{TSAD performance of GenIAS and baseline methods using F1 score, with the best model in bold and the second-best model underlined. U and M denote results unavailable for models limited to UTS or MTS, respectively.}
\label{table:main}
\centering
\begin{adjustbox}{width=0.75\textwidth}
\begin{tabular}{cccccccccccccc} \toprule
                       &                    & \multicolumn{7}{c}{MTS}                      & \multicolumn{4}{c}{UTS} &       \\ \cmidrule(lr){3-9}  \cmidrule(lr){10-13}
Cat.                  & Methods            & MSL   & SMAP  & SMD   & SWAT  & GECCO & SWAN  & Avg-M & Yahoo  & KPI   & UCR   & Avg-U & Avg   \\ \midrule
\multirow{3}{*}{Classic} & OCSVM (1999)       & 0.308 & 0.411 & 0.319 & 0.743 & 0.297 & 0.557 & 0.439 & 0.608  & 0.148 & 0.011 & 0.256 & 0.378 \\
                       & LOF (2000)         & 0.368 & 0.400 & 0.204 & 0.331 & 0.075 & 0.655 & 0.339 & 0.586  & 0.101 & 0.010 & 0.232 & 0.303 \\
                       & IForest (2008)     & 0.278 & 0.353 & 0.306 & 0.745 & 0.255 & 0.717 & 0.442 & 0.553  & 0.236 & 0.010 & 0.266 & 0.384 \\ \cmidrule(lr){2-14}
PU                    & LSTM-VAE (2018)    & 0.407 & 0.437 & 0.298 & 0.724 & 0.208 & 0.625 & 0.450 & M      & M     & M     & M     & M     \\ \cmidrule(lr){2-14}
\multirow{9}{*}{US-WI}  & Donut (2018)       & U     & U     & U     & U     & U     & U     & U     & 0.489  & 0.126 & 0.009 & 0.208 & U     \\
                       & Omni (2019)        & 0.243 & 0.325 & 0.459 & \textbf{0.763} & 0.166 & 0.541 & 0.416 & M      & M     & M     & M     & M     \\
                       & THOC (2020)        & 0.309 & 0.327 & 0.168 & 0.638 & 0.146 & 0.506 & 0.349 & 0.253  & 0.233 & 0.010 & 0.165 & 0.288 \\
                       & MTAD-GAT (2020)     & 0.473 & 0.519 & 0.347 & 0.242 & 0.313 & 0.599 & 0.416 & M      & M     & M     & M     & M     \\
                       & AnomalyTran (2021) & 0.345 & 0.407 & 0.304 & 0.738 & 0.155 & 0.524 & 0.412 & M      & M     & M     & M     & M     \\
                       & TranAD (2022)      & 0.428 & 0.472 & 0.361 & 0.310 & 0.280 & 0.527 & 0.396 & 0.566  & 0.287 & 0.010 & 0.288 & 0.360 \\
                       & TS2Vec (2022)      & 0.299 & 0.371 & 0.173 & 0.261 & 0.063 & 0.502 & 0.278 & 0.484  & 0.204 & 0.007 & 0.232 & 0.263 \\
                       & TimesNet (2023)    & 0.358 & 0.402 & 0.338 & 0.217 & 0.297 & 0.492 & 0.351 & 0.513  & 0.241 & 0.011 & 0.255 & 0.319 \\
                       & Dcdetector (2023)  & 0.227 & 0.275 & 0.083 & 0.217 & 0.021 & 0.492 & 0.219 & 0.112  & 0.042 & 0.008 & 0.054 & 0.164 \\  \cmidrule(lr){2-14}
\multirow{4}{*}{US-I}   & NCAD (2022)        & 0.266 & 0.358 & 0.183 & 0.217 & 0.296 & 0.492 & 0.302 & 0.134  & 0.168 & 0.020 & 0.107 & 0.237 \\
                       & CutAddPaste (2024) & 0.363 & 0.458 & 0.192 & 0.700 & 0.280 & 0.896 & 0.482 & 0.321  & \textbf{0.620} & \underline{0.042} & 0.328 & 0.430 \\
                       & COUTA (2024)       & 0.414 & 0.421 & 0.362 & 0.312 & \textbf{0.389} & 0.626 & 0.421 & 0.658  & 0.221 & 0.010 & 0.296 & 0.379 \\
                       & CARLA (2025)       & \underline{0.523} &\underline{0.529} & \underline{0.511} & 0.721 & 0.293 & \underline{0.908} & \underline{0.581} & \underline{0.749}  & 0.308 & 0.039 & \underline{0.365} & \underline{0.509} \\ \cmidrule(lr){2-14}
Ours                  & GenIAS+CARLA       & \textbf{0.579} & \textbf{0.620} & \textbf{0.528} & \underline{0.755} & \underline{0.329} & \textbf{0.937} & \textbf{0.625} & \textbf{0.811}  & \underline{0.393} & \textbf{0.064} &\textbf{0.423} & \textbf{0.557} \\ \midrule
\end{tabular}
\end{adjustbox}
\vspace{-5pt}
\end{table*}

To illustrate GenIAS's enhanced anomaly generation compared to other anomaly injection-based baselines (US-I), we present t-SNE visualizations in Figure~\ref{fig:tsne}, comparing generated and real anomalies on Yahoo. Additional visualizations are provided in Figure \ref{fig:ext_tsne} in Appendix~\ref{sec:apd:vis}.

Impressively, GenIAS-generated anomalies align substantially more closely with the actual anomaly distribution compared to baseline methods, effectively covering true anomaly regions.
This is essential to provide anomaly-discriminative information for establishing more distinctive normal and anomaly representations in unsupervised TSAD training.
In contrast, baseline methods struggle to generate anomalies that adequately capture the actual anomaly distribution, as shown by large blue regions (actual anomalies) left uncovered by orange regions (generated anomalies).
For example, CARLA, the second-best method in overall ARP and TSAD performance, struggles on Yahoo, where its generated anomalies largely miss true anomaly regions. This highlights a key limitation of relying on handcrafted, predefined anomaly patterns: they risk distribution misalignment when assumptions deviate from reality, often resulting in disconnected or scattered anomaly regions. In contrast, GenIAS leverages controlled, regularized perturbation on the learned distribution to generate anomalies without pattern speculation, achieving better alignment and smoothness.

\subsection{Main TSAD Performance}\label{sec:tsad}
We report the TSAD performance of GenIAS and the competing methods in terms of the best F1 score in Table \ref{table:main}. Due to space constraints, results for the other four metrics discussed in Section~\ref{sec:metrics} are provided in Appendix~\ref{app:othermetrics}. GenIAS achieves significantly better overall TSAD performance compared to the baselines across metrics, demonstrating its superior effectiveness on both MTS and UTS. Specifically, among methods applicable to both types of time series, GenIAS achieves F1 improvements of 9.4\% and 29.5\% over the second- and third-best, respectively. A more detailed analysis by method category and dataset type reveals the following key observations. 

\noindent\textbf{Consistent effectiveness for both MTS and UTS}. On each type of dataset, GenIAS yields similar levels of performance improvement as observed in overall performance. For example, compared to the second-best model for both types, GenIAS improves F1 by 7.6\% on MTS and improves F1 by 15.9\%. This highlights its ability to generate informative and diverse anomalies that can mimic anomalous patterns within each dimension and the dependencies between dimensions. We also provide the average rank of all models in the form of critical difference diagrams \cite{demvsar2006statistical} for F1 and AUPR in both MTS and UTS (see Appendix~\ref{app:critical}). GenIAS achieved the highest rank in all four comparisons and demonstrated statistically significant improvements over all baselines in MTS for the F1 score.

\noindent\textbf{Enhanced Anomaly Generation}. GenIAS consistently outperforms US-I baselines by generating more realistic and diverse anomalous patterns. In particular, its performance gain over CARLA while only differing in their anomaly generation highlights the effectiveness of GenIAS’s tailored variational learning and novel perturbation strategies, which enforce more distinctive normal representations without relying on handcrafted anomaly types.

\noindent\textbf {Importance of anomaly generation in TSAD}. US-I injection methods generally outperform other unsupervised approaches and, in some cases, like CutAddPaste and CARLA, can even outperform the supervised baseline LSTM-VAE. This suggests that well-designed anomaly generation can produce realistic pseudo-anomalies that capture true anomalies, including those missed by labeled data. Moreover, it enables greater anomaly diversity, improving representation learning. Therefore, anomaly injection should be considered as an essential module in TSAD models.

\subsection{Ablation Study} \label{sec:ablation}
To better understand the contributions of different components and configurations in our proposed model, we conducted an ablation study.

\begin{table}[t]
\caption{Performance comparison of patching methods}
\label{tab:patch}
\centering
\begin{adjustbox}{width=0.75\columnwidth}
\begin{tabular}{@{}lcccccc@{}}
\toprule
\textbf{Dataset} & \multicolumn{2}{c}{\textbf{wo Patch}} & \multicolumn{2}{c}{\textbf{Length-driven}} & \multicolumn{2}{c}{\textbf{Deviation-based}} \\
\cmidrule(lr){2-3} \cmidrule(lr){4-5} \cmidrule(lr){6-7}
& F1 & AUPR & F1 & AUPR & F1 & AUPR \\ \midrule
MSL   & 0.535 & 0.466 & 0.547 & 0.488 & \textbf{0.579} & \textbf{0.529} \\
SMAP  & 0.533 & 0.425 & 0.571 & 0.429 & \textbf{0.620} & \textbf{0.467} \\
Yahoo & 0.755 & 0.776 & 0.741 & 0.766 & \textbf{0.811} & \textbf{0.794} \\ 
\bottomrule
\end{tabular}
\end{adjustbox}
\vspace{-5pt}
\end{table}

\begin{figure}[t]
    \centering
    \includegraphics[width=0.95\linewidth]{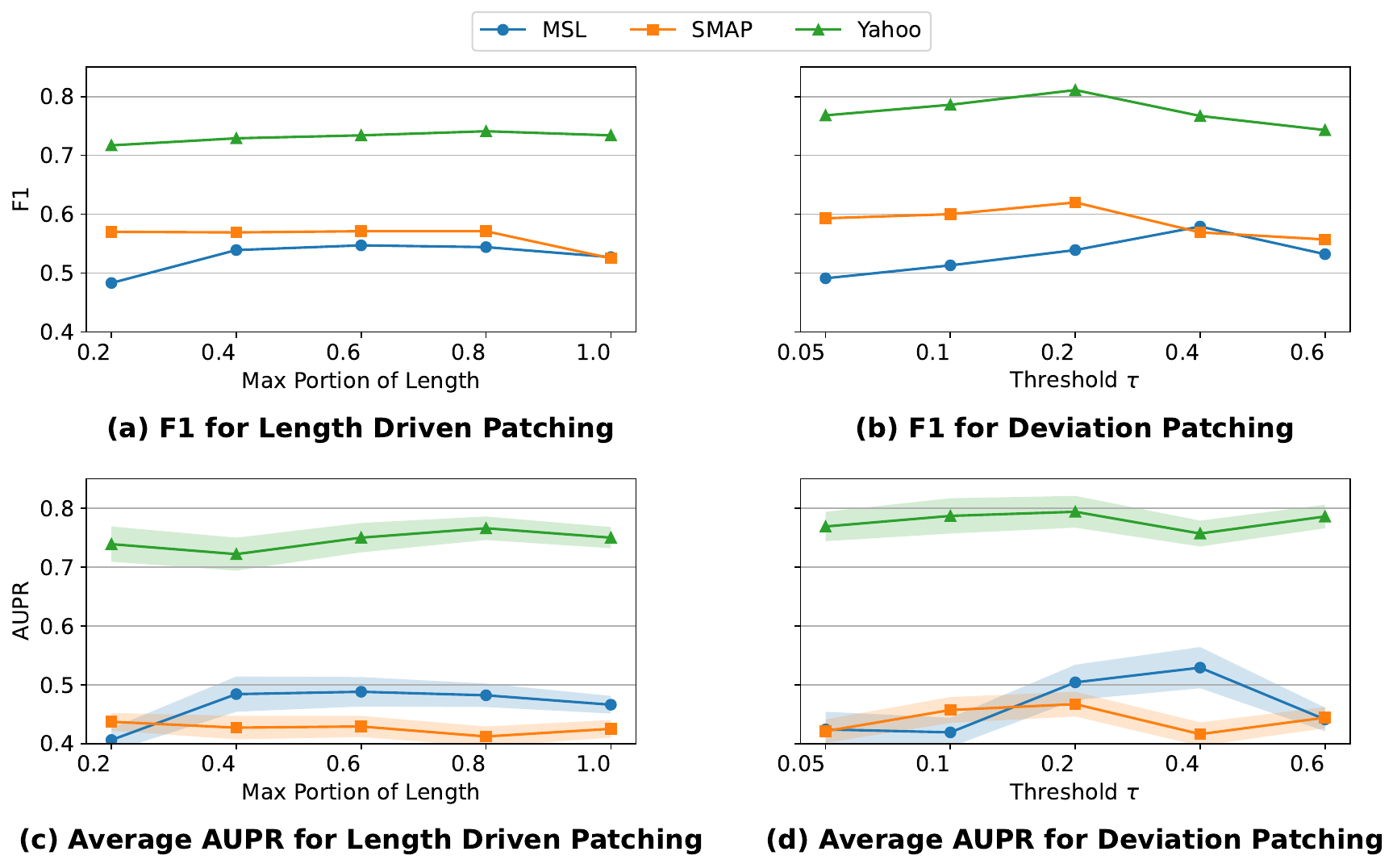}
    \caption{F1 and AUPR for different patching methods.}
    \label{fig:patch_res}
    \vspace{-5pt}
\end{figure}

\noindent\textbf{Impact of the Patch Selection.}
Figure~\ref{fig:patch_res} presents our analysis of F1 and AUPR for two patching methods—Length-driven and Deviation patching—evaluated across varying patch lengths and thresholds. The x-axis indicates the varying portion of patch length and the threshold \( \tau \) values for these approaches. The best results from this parameter study are compared in Table~\ref{tab:patch}, demonstrating that Deviation patching consistently outperforms other approaches, achieving the highest F1 and AUPR scores across all datasets. For example, on SMAP, it improves F1 by 0.044 and AUPR by 0.063 compared to the baseline without patching (wo Patch). \textbf{Length-driven} patching, which injects anomalies based on predefined temporal lengths, shows moderate improvements over the baseline in most scenarios but slightly underperforms on Yahoo dataset, where it reduces F1 and AUPR scores. In contrast, the strong performance of Deviation patching (Section~\ref{sec:patching}) highlights its effectiveness in aligning generated anomalies with true anomaly distributions, making it a more robust and effective approach for anomaly generation and detection.

\begin{figure}[t]
\centering
    \begin{subfigure}{0.23\textwidth}
    \label{fig:f1_prior}
        \includegraphics[width=\textwidth]{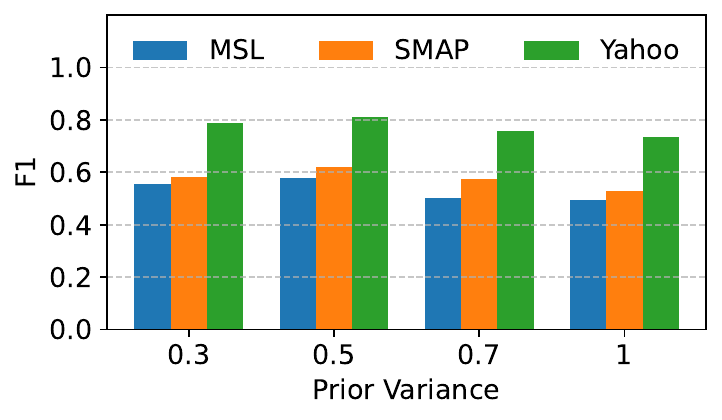}
    \end{subfigure}
    \begin{subfigure}{0.23\textwidth}
    \label{fig:aupr_prior}
        \includegraphics[width=\textwidth]{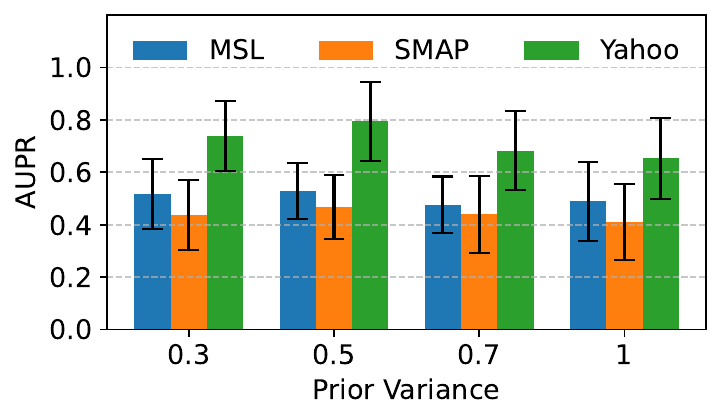}  
    \end{subfigure}
    \caption{F1 and AUPR for different prior variance ($\sigma_{\text{prior}}^2$)}
    \label{fig:prior}
    \vspace{-10pt}
\end{figure}

\noindent\textbf{Effectiveness of the Prior Variance.}
Figure~\ref{fig:prior} analyzes the impact of prior variance on anomaly detection performance, showing consistent trends across datasets and low sensitivity to this parameter. A prior variance of 0.5 yields the best F1 scores across all datasets (MSL: 0.579, SMAP: 0.62, Yahoo: 0.811), suggesting an effective balance between bias and variance. However, as the prior variance increases beyond 0.5, performance declines, indicating that overly broad priors 
reduce model specificity. Similarly, the AUPR results follow this trend, with 0.5 showing superior means and the smallest standard deviation (std), emphasizing the robustness of this setting. The growing std at higher variances further supports that large priors can lead to unstable and less reliable performance. Overall, a prior variance of 0.5 emerges as the most effective choice, yielding a balance between accuracy and stability for both F1 and AUPR.

\begin{figure}[t]
    \centering
    \includegraphics[width=0.95\linewidth]{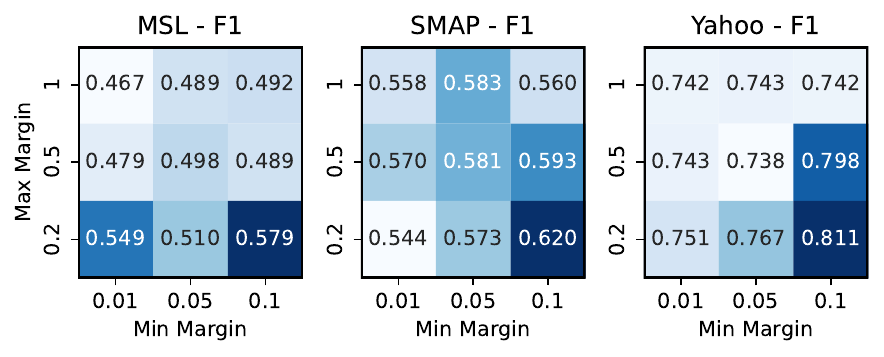}
    \caption{Effectiveness of min-max margins.}
    \label{fig:min-max}
    \vspace{-10pt}
\end{figure}

\noindent\textbf{Effectiveness of the Min-Max Margin.}
The heatmaps in Figure~\ref{fig:min-max} illustrate the impact of min-max margins (\(\delta_{\text{min}}, \delta_{\text{max}}\)) on F1 across MSL, SMAP, and Yahoo datasets. Smaller max margins (\(\delta_{\text{max}} = 0.2\)) and moderate min margins (\(\delta_{\text{min}} = 0.1\)) consistently yield the best results, with peak F1 scores of 0.579, 0.62, and 0.811 for MSL, SMAP, and Yahoo, respectively. 
These results validate the design rationale in Section~\ref{sec:pert_loss}, where \(\delta_{\text{min}}\) enforces distinctness and \(\delta_{\text{max}}\) maintains realism. Larger margins (e.g., \(\delta_{\text{max}} = 1\)) reduce performance, as seen in lower F1 scores of MSL and Yahoo, emphasizing the importance of balancing margins.

\section{Conclusion and Future Work}
We introduced GenIAS, a generative anomaly injection framework designed to enhance time series anomaly detection by creating realistic and diverse anomalies. By perturbing latent representations of a generative model trained on normal data, GenIAS produces anomalies that are similar to real-world anomalies. Integrated into a TSAD model, it boosts anomaly detection performance by enriching training data with high-quality synthetic anomalies.

For future work, we will explore adaptive perturbation and patching strategies, along with modelling inter-dimensional correlations, to further improve synthetic anomaly generation for TSAD.

\section*{Impact Statement}
GenIAS improves TSAD by enabling realistic and controllable synthetic anomaly generation using variational neural models. This alleviates the difficulty of training robust algorithms in application domains with few labeled anomalies, particularly for complex multivariate time series with subtle anomalous patterns.


\bibliography{ref}
\bibliographystyle{icml2026}

\newpage
\appendix
\onecolumn
\section{Datasets Descriptions}
\label{app:dataset}
Specifically, five popular TSAD datasets are included to evaluate multivariate TSAD. Among these, \textbf{NASA Datasets} — The \textbf{Mars Science Laboratory (MSL)} and \textbf{Soil Moisture Active Passive (SMAP)} datasets\footnote{\url{https://www.kaggle.com/datasets/patrickfleith/nasa-anomaly-detection-dataset-smap-msl}}~\citep{hundman2018detecting} are collected from NASA spacecraft. These datasets include anomaly information derived from incident reports for spacecraft monitoring systems. \textbf{Server Machine Dataset (SMD)}\footnote{\url{https://github.com/NetManAIOps/OmniAnomaly/tree/master/ServerMachineDataset}}~\citep{su2019robust} comprises data from 28 servers over 10 days. Normal data is observed during the first 5 days, while anomalies are injected into the remaining 5 days. \textbf{Secure Water Treatment (SWaT)}\footnote{\url{https://itrust.sutd.edu.sg/testbeds/secure-water-treatment-swat/}}~\citep{mathur2016swat} contains data from a water treatment platform monitored by 51 sensors over 11 days. A total of 41 anomalies were deliberately generated during the last 4 days using various attack scenarios. \textbf{SWAN}\footnote{\url{https://dataverse.harvard.edu/api/access/datafile/:persistentId?persistentId=doi:10.7910/DVN/EBCFKM/K9AOSI}}~\citep{
DVN/EBCFKM_2020} is a publicly accessible, MTS benchmark derived from solar photospheric vector magnetograms in the Spaceweather HMI Active Region Patch series. \textbf{GECCO}\footnote{\url{http://www.spotseven.de/gecco/gecco-challenge/gecco-challenge-2018/}}~\citep{moritz2018gecco} is a dataset on drinking water quality adapted for applications of the `Internet of Things'. It was introduced during the 2018 Genetic and Evolutionary Computation Conference.

In addition to the above MTS datasets, we also evaluated UTS datasets: \textbf{UCR}\footnote{\url{https://www.cs.ucr.edu/~eamonn/time_series_data_2018/}}~\citep{wu2021current} was provided for the Multi-dataset Time Series Anomaly Detection Competition at KDD2021. This dataset includes 250 UTS subdatasets, each containing one anomaly within its subsequences.
\textbf{Yahoo}\footnote{\url{https://webscope.sandbox.yahoo.com/catalog.php?datatype=s&did=70}}~\citep{yahoods} consists of 367 hourly time series with labels. Our focus is on the A1 benchmark, which includes 67 UTS representing “real” production traffic data from Yahoo properties.
\textbf{Key Performance Indicators (KPI)}\footnote{\url{https://github.com/NetManAIOps/KPI-Anomaly-Detection}} are sourced from real-world Internet company scenarios. It contains service and machine KPIs, such as response time, page views, CPU usage, and memory usage.

For all datasets, we adopt their original train/test splits, with the training data being unlabeled, except for the Yahoo and GECCO datasets, where we adopt a 50/50 split for training and testing.

\section{List of Symbols and Notations}

You can find the list of symbols and notations, and their definitions in Table \ref{tab:symbols_notations}.

\begin{table}[tp]
    \centering
    \small
    \caption{List of Symbols and Notations}
    \begin{tabular}{ll}
        \toprule
        \textbf{Symbol} & \textbf{Definition} \\
        \midrule
        \( \mathcal{D} \) & Time series dataset \\
        \( T \) & Number of time steps in a time series window \\
        \( D \) & Number of time series dimensions \\
        \( \mathbf{X} \) & Time series window of shape \( T \times D \) \\
        \( \widetilde{\mathbf{X}} \) & Generated anomalous time series sample \\
        \( \hat{\mathbf{X}} \) & Reconstructed normal sample from \( \mathbf{z} \) \\  
        \( \Gamma \) & Generative mapping from normal to anomalous samples \\
        \( \phi \) & Parameters of the encoder \\
        \( \theta \) & Parameters of the decoder \\
        \( \mathbf{z} \) & Latent representation of a normal sample \\
        \( \widetilde{\mathbf{z}} \) & Perturbed latent representation of an anomaly \\
        \( L \) & Latent space dimensionality\\
        \( p(\mathbf{X}) \) & Normal data distribution \\
        \( p(\widetilde{\mathbf{X}}) \) & Anomalous data distribution \\  
        \( q(\mathbf{z} | \mathbf{X}) \) & Latent distribution of normal data \\
        \( p(\widetilde{\mathbf{z}}) \) & Shifted latent distribution for anomalies \\
        \( \mu \) & Mean of the latent distribution \\
        \( \sigma \) & Standard deviation of the latent distribution \\
        \( \sigma_{\text{prior}} \) & Prior standard deviation for KL divergence loss \\
        \( \psi \) & Learned perturbation scale \\
        \( \delta_{\text{min}} \) & Minimum margin for anomaly distinctness \\
        \( \delta_{\text{max}} \) & Maximum allowable deviation for realism \\ 
        \( \tau \) & Threshold for deviation-based patching \\
        \bottomrule
    \end{tabular}
    \label{tab:symbols_notations}
\end{table}

\section{Algorithms Overview} \label{app:code}
The pseudo-code for GenIAS is provided in Algorithm~\ref{alg:genias}, while the deviation-based patching approach is detailed in Algorithm~\ref{alg:patching}.

\begin{algorithm}[h]
\caption{End-to-End Training of GenIAS}
\label{alg:genias}
\KwIn{TS dataset \( \mathcal{D}\), prior standard deviation \(\sigma_{\text{prior}}\), loss coefficients \(\alpha\), \(\beta\), \(\gamma\), \(\zeta\).}
\KwOut{Trained encoder parameters \(\phi\), decoder parameters \(\theta\), perturbation scale \(\psi\).}

\nl \textbf{Initialize} encoder \(q_\phi\), decoder \(g_\theta\), latent space parameters \(\mu\), \(\sigma\), and perturbation scale \(\psi\)\;
\nl \While{training not converged}{
\nl     Sample a batch \(\{\mathbf{\text{X}}_i\}_{i=1}^{B} \subset \mathcal{D}\) of size \(B\)\;
\nl     Encode: Obtain latent parameters \(\mu_i\), \(\sigma_i\) by \(q_{\phi}(\mathbf{\text{X}}_i)\)\;
\nl     Sample latent vectors \(\mathbf{\text{z}}_i \sim \mathcal{N}({\mu}_i, \operatorname{diag}({\sigma}_i^2))\)\;
\nl     Generate perturbed latent vectors \(\widetilde{\mathbf{z}}_i\) \hfill $\triangleright$ Eq. (\ref{eq:z_perturbed})\\
\nl     Decode reconstructed samples: \(\hat{\mathbf{X}}_i = g_\theta(\mathbf{z}_i)\)\;
\nl     Decode anomalous samples: \(\hat{\mathbf{X}}_i\ = g_\theta(\widetilde{\mathbf{z}}_i)\)\;
\nl     Compute the reconstruction loss $\mathcal{L}_{\text{recon}}$; \hfill $\triangleright$ Eq. (\ref{eq:reconstruction_loss})\\
\nl     Compute the perturbation loss $\mathcal{L}_{\text{perturb}}$; \hfill $\triangleright$ Eq. (\ref{eq:perturbation_loss})\\
\nl     Compute the zero-perturb loss $\mathcal{L}_{\text{zero-perturb}}$; \hfill $\triangleright$ Eq. (\ref{eq:zero_perturbation_loss})\\
\nl     Compute the compact KL loss $\mathcal{L}_{\text{comp-KL}}$; \hfill $\triangleright$ Eq. (\ref{eq:kl_divergence_loss})\\
\nl     $\mathcal{L}_{\text{total}} = \alpha \cdot \mathcal{L}_{\text{recon}} + \beta \cdot \mathcal{L}_{\text{perturb}} + \gamma \cdot \mathcal{L}_{\text{zero-perturb}} + \zeta \cdot \mathcal{L}_{\text{comp-KL}}$;\\
\nl     Update the model parameters \(\phi\), \(\theta\) and \(\psi\)\  by minimizing \(\mathcal{L}_{\text{total}}\)\;
}
\nl \Return \(\phi\), \(\theta\), \(\psi\)\;
\end{algorithm}

\begin{algorithm}[h]
\caption{Patching with GenIAS}
\label{alg:patching}
\KwIn{Pretrained GenIAS model's encoder \(q_\phi\), decoder \(g_\theta\) and perturbation scales \(\psi\), time series window \( \mathbf{X} \), threshold scaling factor \( \tau \).}
\KwOut{Patched anomalous time series windows \( \tilde{\mathbf{X}}_{\text{patched}} \).}

\textbf{Phase 1: Anomaly Generation}\\
\nl Load encoder \(q_\phi\), decoder \(g_\theta\), and perturbation scales \(\psi\).

\nl     Encode: Obtain latent parameters \(\mu\), \(\sigma\) by \(q_{\phi}(\mathbf{\text{X}})\)\;

\nl     Generate perturbed latent vectors \(\widetilde{\mathbf{z}}\) \hfill $\triangleright$ Eq. (\ref{eq:z_perturbed})\\

\nl     Decode \( \widetilde{\mathbf{z}} \) to generate preliminary anomalies: \( \tilde{\mathbf{X}} = g_\theta(\tilde{\mathbf{z}}) \)\;

\textbf{Phase 2: Deviation-Based Patching}\\
\nl \ForEach{dimension \( d \) \text{in} \( \mathbf{X} \)}{
\nl     \( \text{deviation}_d = \left| \mathbf{X}_d - \tilde{\mathbf{X}}_d \right|^2 \)\;
\nl     \( \text{amplitude}_d = \max(\mathbf{X}_d) - \min(\mathbf{X}_d) \)\;
\nl     Generate patched output for dimension \( d \):
    \[
    \tilde{\mathbf{X}}_{\text{patched}, d} = 
    \begin{cases} 
    \tilde{\mathbf{X}}_d & \text{if deviation}_d > \tau \cdot \text{amplitude}_d, \\
    \mathbf{X}_d & \text{otherwise}.
    \end{cases}
    \]
}

\nl \Return \( \tilde{\mathbf{X}}_{\text{patched}} \) as the patched anomalous TS windows.
\end{algorithm}

\section{Proofs of Theorem} \label{app:proofs}
\subsection{Notation and Setup}
Assume that the latent encoder produces a Gaussian posterior
\begin{equation}
q_{\phi}(z \mid {\mathbf{X}}) = \mathcal{N} \left( \mu, \operatorname{diag}(\sigma^2) \right),
\end{equation}
and that the prior is chosen as
\begin{equation}
p(z) = \mathcal{N} \left( 0, \sigma^2_{\text{prior}} I \right),
\end{equation}
where $\sigma^2_{\text{prior}}$ is a hyperparameter and $\mu$ is a vector $\mu=(\mu_1,...,\mu_L)$. In the following, we consider the effect of using a prior $\sigma^2_{\text{prior}} < 1$ in our Compact KL Loss (Equation~\ref{eq:kl_divergence_loss}).

\subsection{Prerequisites}
\begin{lemma}[Compact Latent Space in Compact KL Loss] \label{lemma:compact_full}
Consider the KL divergence between the estimated posterior and the prior distributions for a latent dimension $j$. If the compact KL loss -Equation~\ref{eq:kl_divergence_loss}- is minimized (or its gradient dominates locally) with a prior variance $\sigma^2_{\text{prior}} < 1$, then for each latent variable $z_j$ the optimal posterior variance satisfies $\sigma^2_j = \sigma^2_{\text{prior}}$. Hence, the variance of latent representations for normal samples is forced to be less than 1, i.e., more ``compact''.
\end{lemma}

\begin{proof}
For a one-dimensional Gaussian posterior $\mathcal{N}(\mu, \sigma^2)$ and a prior $\mathcal{N}(0, \sigma^2_{\text{prior}})$, the KL divergence is
\begin{equation}
D_{\text{KL}}(q(z \mid \mathbf{X}) \parallel p(z)) = \frac{1}{2} \left( \frac{\sigma^2}{\sigma^2_{\text{prior}}} + \frac{\mu^2}{\sigma^2_{\text{prior}}} - 1 - \log \frac{\sigma^2}{\sigma^2_{\text{prior}}} \right).
\end{equation}

If we minimize the KL term with respect to $\sigma^2$ (ignoring the contribution from the reconstruction term and assuming $\mu$ is fixed), we define a function $f$ that takes the part that depends on $\sigma^2$:
\begin{equation}
f(\sigma^2) = \frac{1}{2} \left( \frac{\sigma^2}{\sigma^2_{\text{prior}}} - \log \frac{\sigma^2}{\sigma^2_{\text{prior}}} \right).
\end{equation}

Taking the derivative,
\begin{equation}
\frac{d f}{d \sigma^2} = \frac{1}{2} \left( \frac{1}{\sigma^2_{\text{prior}}} - \frac{1}{\sigma^2} \right).
\end{equation}

Setting the derivative to zero yields
\begin{equation}
\frac{1}{\sigma^2_{\text{prior}}} - \frac{1}{\sigma^2} = 0 \quad \Longrightarrow \quad \sigma^2 = \sigma^2_{\text{prior}}.
\end{equation}

Thus, when $\sigma^2_{\text{prior}} < 1$, the optimal posterior variance (in the sense of minimizing the KL divergence) is less than 1, leading to a more compact latent representation.
\end{proof}

\begin{lemma}[Reduced Reconstruction Error for Normal Samples] \label{lemma-recon-norm}
Assume that the decoder $g_\theta(z)$ is trained on normal data. If the latent representations of normal samples are forced to be compact (i.e. the variance $\sigma^2(\mathbf{X})$ is reduced), then a first-order Taylor expansion argument shows that the additional error due to sampling variability is reduced, thereby lowering the expected reconstruction error.
\end{lemma}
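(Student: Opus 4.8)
The plan is to write the expected reconstruction error for a fixed normal sample $\mathbf{X}$ as an expectation over the reparameterized latent variable $z = \mu + \sigma \odot \epsilon$ with $\epsilon \sim \mathcal{N}(\mathbf{0}, \mathbf{I})$, and then Taylor-expand the decoder around the posterior mean $\mu$. Setting $R(\sigma^2) = \mathbb{E}_{\epsilon}\big[\|\mathbf{X} - g_\theta(\mu + \sigma \odot \epsilon)\|^2\big]$, I would use the expansion $g_\theta(\mu + \sigma \odot \epsilon) = g_\theta(\mu) + J_\theta(\mu)(\sigma \odot \epsilon) + r(\mu, \sigma \odot \epsilon)$, where $J_\theta(\mu)$ is the decoder Jacobian at $\mu$ and $r$ collects the second- and higher-order remainder. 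Throughout, $\mu$ and the decoder parameters are held fixed, exactly as in Lemma~\ref{lemma:compact}, so that the only quantity varying is the posterior variance $\sigma^2$.

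First I would treat the linear term. Substituting the expansion and expanding the square splits $R(\sigma^2)$ into a deterministic part $\|\mathbf{X} - g_\theta(\mu)\|^2$, a term linear in $\epsilon$, a quadratic term $\|J_\theta(\mu)(\sigma\odot\epsilon)\|^2$, and remainder contributions. Since $\mathbb{E}[\epsilon]=\mathbf{0}$, the linear term vanishes in expectation. Using $\mathbb{E}[\epsilon\epsilon^\top] = \mathbf{I}$, the quadratic term evaluates to $\operatorname{tr}\big(J_\theta(\mu)\,\operatorname{diag}(\sigma^2)\,J_\theta(\mu)^\top\big) = \sum_{j=1}^{L} \sigma_j^2\,\|J_\theta(\mu)_{:,j}\|^2$, a nonnegative linear function of the coordinates of $\sigma^2$ that is strictly increasing in each $\sigma_j^2$ whenever the corresponding Jacobian column is nonzero. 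Hence, to first order, $R(\sigma^2) \approx \|\mathbf{X} - g_\theta(\mu)\|^2 + \sum_{j} \sigma_j^2\,\|J_\theta(\mu)_{:,j}\|^2$, and shrinking $\sigma^2$ (as forced by Lemma~\ref{lemma:compact} when $\sigma_{\text{prior}} < 1$) monotonically decreases the additional error due to sampling variability; taking the expectation over $\mathbf{X} \sim p(\mathbf{X})$ preserves this.

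The main obstacle is controlling the remainder $r(\mu, \sigma\odot\epsilon)$ so that the first-order picture is faithful. I would assume the decoder has a Lipschitz gradient (bounded Hessian) on the relevant region of latent space, giving $\|r(\mu, \sigma\odot\epsilon)\| \le \tfrac{C}{2}\|\sigma\odot\epsilon\|^2$ for some constant $C$; the remainder's contribution to $R(\sigma^2)$ is then $O\big(\mathbb{E}\|\sigma\odot\epsilon\|^4 + \|\mathbf{X}-g_\theta(\mu)\|\cdot\mathbb{E}\|\sigma\odot\epsilon\|^2\big) = O(\|\sigma^2\|^2) + O(\|\sigma^2\|)$, with the leading $O(\|\sigma^2\|)$ piece also vanishing as $\sigma^2 \to \mathbf{0}$. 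The cleanest formulation is therefore asymptotic: as $\|\sigma^2\| \to 0$ the sampling-induced excess reconstruction error behaves like $\sum_{j} \sigma_j^2\,\|J_\theta(\mu)_{:,j}\|^2 + o(\|\sigma^2\|)$, so compactification strictly reduces it for sufficiently small posterior variance. A secondary subtlety is that $\mu$ and $J_\theta$ could in principle co-adapt as $\sigma_{\text{prior}}$ changes; I would neutralize this by isolating the dependence on $\sigma^2$ alone, consistent with the setup of Lemma~\ref{lemma:compact}.
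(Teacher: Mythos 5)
Your proposal is correct and takes essentially the same route as the paper's proof: a first-order Taylor expansion of $g_\theta$ about the posterior mean, the vanishing of the term linear in $\epsilon$, and identification of the sampling-induced excess error with $\operatorname{trace}\big(J_{g_\theta}^{\top} J_{g_\theta}\, \operatorname{diag}(\sigma^2)\big) = \sum_j \sigma_j^2 \|J_{:,j}\|^2$, which shrinks as the enhanced KL loss compacts $\sigma^2$. Your explicit handling of the cross-term with $\mathbf{X} - g_\theta(\mu)$ and the bounded-Hessian control of the Taylor remainder is extra rigor the paper omits, but it does not change the underlying argument.
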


\begin{proof}
For an input $\mathbf{X}$ with latent variable $z \sim q_\phi(z \mid \mathbf{X})$ and decoder $g_\theta(z)$, the reconstruction error is given by
\begin{equation}
\text{MSE}(\mathbf{X}) = \mathbb{E}_{z \sim q_\phi(z \mid \mathbf{X})} \left[ \| \mathbf{X} - g_\theta(z) \|^2 \right].
\label{eq:reconappendix}
\end{equation}

A first-order Taylor expansion of $g_\theta(z)$ around the mean $\mu(\mathbf{X})$ gives
\begin{equation}
g_\theta(z) \approx g_\theta(\mu(\mathbf{X})) + J_{g_\theta} (\mu(\mathbf{X})) (z - \mu(\mathbf{X})),
\label{eq:jacob}
\end{equation}
where $J_{g_\theta}$ is the Jacobian of $g_\theta$ at $\mu(\mathbf{X})$. We substitute Equation~\ref{eq:jacob} in Equation~\ref{eq:reconappendix}. While $\mu(\mathbf{X})$ stays the same in both Compact KL and KL cases, $z-\mu(\mathbf{X})$ varies in these two cases. Hence, we focus on the variable term in this substitution, i.e., $\mathbb{E} \left[ \| J_{g_\theta} (z - \mu(\mathbf{X})) \|^2 \right]$. 
In a multi-dimensional latent space, if the posterior covariance is $\Sigma(\mathbf{X}) = \operatorname{diag}(\sigma_1^2(\mathbf{X}), \dots, \sigma_d^2(\mathbf{X}))$, then the second-order contribution to the reconstruction error is approximately
\begin{equation}
\mathbb{E} \left[ \| J_{g_\theta} (z - \mu(\mathbf{X})) \|^2 \right] = \operatorname{trace} \left( J_{g_\theta}^T J_{g_\theta} \Sigma(\mathbf{X}) \right).
\label{eq:msejacob}
\end{equation}

When the compact KL loss forces each $\sigma_j^2(\mathbf{X})$ to be small (namely $\sigma^2_{\text{prior}} < 1$), the contribution of the variability term
\begin{equation}
\operatorname{trace} \left( J_{g_\theta}^T J_{g_\theta} \Sigma(\mathbf{X}) \right)
\end{equation}
is reduced. Hence, the overall reconstruction error decreases. Figure \ref{fig:mse_hist} shows the impact of $\sigma^2_{\text{prior}}$ on the reconstruction MSE distribution for entity C-1 in MSL dataset.
\end{proof}


\subsection{Proof of Proposition 1} \label{app:proof_t_1}
\textbf{Statement:} \textit{Let a VAE be trained on normal data with a latent prior \( \mathcal{N}(0, \sigma^2_{\text{prior}}) \), where \( \sigma_{\text{prior}} < 1 \). The encoder posterior for normal samples follows \( \mathcal{N}(\mu, \sigma^2_{\text{normal}}) \), with compactness enforced by KL regularization. If the encoder \( \phi \) applies a perturbation scale \( \psi \), inflating variance as  
\(
\sigma^2_{\text{anom}} = \psi \sigma^2_{\text{normal}}, \quad \text{with } \psi > 1,
\)
then the KL divergence between normal and anomalous latent distributions strictly increases compared to the case where \( \sigma_{\text{prior}} \geq 1 \).}  


\begin{proof}
For generated anomalous samples, assume that the latent posterior is
\begin{equation}
q_{\text{normal}}(z) = \mathcal{N}(\mu_{\text{normal}}, \operatorname{diag}(\sigma^2_{\text{normal}})),
\end{equation}
with $\sigma^2_{\text{normal}} \approx \sigma^2_{\text{prior}}$ (by Lemma~\ref{lemma:compact}). For anomalous inputs, suppose the encoder yields an inflated variance so that
\begin{equation}
q_{\text{anom}}(z) = \mathcal{N}(\mu_{\text{anom}}, \operatorname{diag}(\psi \sigma^2_{\text{normal}})),
\end{equation}
with $\psi > 1$. We assume that the means are similar, i.e. $\mu_{\text{anom}} \approx \mu_{\text{normal}}$, so that the primary difference is in the variances.

We aim to show that the KL divergence between the latent distributions of normal and anomalous samples \textbf{strictly increases} when the VAE prior variance is \textbf{compact} (\(\sigma_{\text{prior}} < 1\)) compared to when no prior compactness is enforced (\(\sigma_{\text{prior}} \geq 1\)).

From the given equation, the KL divergence between the anomalous and normal posterior distributions in one dimension is:

\begin{equation}
\begin{split}
D_{\text{KL}} \left( \mathcal{N}(\mu, \psi \sigma^2_{\text{normal}}) \Big\| \mathcal{N}(\mu, \sigma^2_{\text{prior}}) \right) = \\
\frac{1}{2} \left( \frac{\psi \sigma^2_{\text{normal}}}{\sigma^2_{\text{prior}}} - 1 - \log \frac{\psi \sigma^2_{\text{normal}}}{\sigma^2_{\text{prior}}} \right).
\end{split}
\end{equation}

We compare two scenarios:
\begin{itemize}
    \item \textbf{Scenario A (Compact Latent Space):} \( \sigma_{\text{prior}} < 1 \), let \( \sigma_{\text{prior}} = \alpha \), where \( 0 < \alpha < 1 \).
    \item \textbf{Scenario B (Non-Compact Latent Space):} \( \sigma_{\text{prior}} \geq 1 \), specifically, we consider \( \sigma_{\text{prior}} = 1 \).
\end{itemize}

For \textbf{Scenario A} (\(\sigma_{\text{prior}} = \alpha\)):

\begin{equation}
D_{\text{KL}, A} = \frac{1}{2} \left( \frac{\psi \sigma^2_{\text{normal}}}{\alpha^2} - 1 - \log \frac{\psi \sigma^2_{\text{normal}}}{\alpha^2} \right).
\end{equation}

For \textbf{Scenario B} (\(\sigma_{\text{prior}} = 1\)):

\begin{equation}
D_{\text{KL}, B} = \frac{1}{2} \left( \frac{\psi \sigma^2_{\text{normal}}}{1} - 1 - \log \frac{\psi \sigma^2_{\text{normal}}}{1} \right).
\end{equation}

Since \( 0 < \alpha < 1 \), we observe that the fraction \( \frac{\psi \sigma^2_{\text{normal}}}{\alpha^2} \) in \(D_{\text{KL}, A}\) is \textbf{strictly larger} than \( \frac{\psi \sigma^2_{\text{normal}}}{1} \) in \(D_{\text{KL}, B}\).

The function \( f(x) = x - 1 - \log x \) is \textbf{monotonically increasing} for \( x > 1 \), meaning:

\begin{equation}
f\left(\frac{\psi \sigma^2_{\text{normal}}}{\alpha^2}\right) > f\left(\frac{\psi \sigma^2_{\text{normal}}}{1}\right).
\end{equation}

Thus, we obtain:

\begin{equation}
D_{\text{KL}, A} > D_{\text{KL}, B}.
\end{equation}

Since \(D_{\text{KL}, A} > D_{\text{KL}, B}\), we conclude that the KL divergence between normal and anomalous latent distributions is strictly larger when the VAE is trained with a compact latent space (\(\sigma_{\text{prior}} < 1\)). 

This means that a compact latent space \textbf{amplifies the separation} between normal and anomalous samples under perturbation.
\end{proof}

\section{Details of Evaluation Metrics}\label{app:eval_metrics}
To support a consistent and meaningful evaluation of generated anomalies, we utilize a Deep SVDD \citep{pmlr-v80-ruff18a} representation with a fixed 128-dimensional embedding for all models. This representation provides a structured feature space. Additionally, we evaluate models using test datasets that contain labeled anomalies, allowing for a more reliable assessment of anomaly generation quality. Let \(\Omega\) denote the DeepSVDD model. The representation of real anomalous samples in the test dataset is given by \(V_{\text{real}} = \Omega(\text{X}_{\text{real anomalies}})\), while the representation of generated anomalous samples, produced by the anomaly generator model from the train dataset, is given by \(V_{\text{gen}} = \Omega(\tilde{{\text{X}}})\).

\subsection{Anomalous Representation Proximity (ARP)} \label{app:ARP}
ARP measures how well the generated anomalies cover the distribution of real anomalies in the representation space. For each real anomaly, we compute the distance to its closest generated anomaly. The inverse of the average of these minimum distances reflects how well the generated anomalies span the real anomaly space. Formally, if \(V_{\text{real}}\) and \(V_{\text{gen}}\) are the sets of real and generated anomaly representations, the ARP is defined as:
\begin{equation}
\text{ARP} = \frac{1}{1 + \frac{1}{|V_{\text{real}}|} \sum_{\mathbf{v}^r \in V_{\text{real}}} \min_{\mathbf{v}^g \in V_{\text{gen}}} d(\mathbf{v}^r, \mathbf{v}^g)},
\end{equation}
where \(d(\cdot, \cdot)\) is the Euclidean distance metric. A higher ARP indicates that the generated anomalies are more similar to real anomalies in the representation space, meaning better realism.

\subsection{Entropy-Based Diversity Index (EDI)} \label{app:EDI}
EDI quantifies how well a model generates a diverse set of anomalies by evaluating their distribution across the representation space. First, we collect the representations of all generated anomalies from all models into a global set:
\begin{equation}
V_{\text{all}} = \bigcup_{m=1}^{M} V_{\text{gen}}^{(m)},
\end{equation}
where \( V_{\text{gen}}^{(m)} = \{\mathbf{v}_1^{(m)}, \mathbf{v}_2^{(m)}, \dots, \mathbf{v}_{N_m}^{(m)}\} \) represents the set of generated anomalies for model \( m \), and \( M \) is the number of models being evaluated. Next, we partition the overall representation space into \( K \) non-overlapping regions \( \{R_1, R_2, \dots, R_K\} \) based on \( V_{\text{all}} \), ensuring that each anomaly belongs to exactly one region. 

For a specific model \( m \), the proportion of its generated anomalies in region \( R_i \) is defined as:
\begin{equation}
p_i^{(m)} = \frac{|\{\mathbf{v}_j^{(m)} \in V_{\text{gen}}^{(m)} \mid \mathbf{v}_j^{(m)} \in R_i\}|}{N_m},
\end{equation}
where \( N_m = |V_{\text{gen}}^{(m)}| \) is the total number of generated anomalies for model \( m \), and \( |\cdot| \) denotes the cardinality of a set.

The EDI for model \( m \) is then computed using \textit{Shannon Entropy}~\cite{shannon1948mathematical} as:
\begin{equation}
\text{EDI}^{(m)} = - \sum_{i=1}^{K} p_i^{(m)} \log p_i^{(m)}.
\end{equation}

A higher EDI indicates that a model generates anomalies that are well-distributed across the space, reflecting greater diversity rather than being concentrated in a few regions.

\subsection{TSAD Evaluation Metrics} \label{app:tsadmetrics}
We use the Best F1 score to measure the performance of TSAD models at the optimal threshold. AUPR and AUROC are holistic measures that assess model performance across various thresholds. All these metrics account for class imbalance, with AUPR placing greater emphasis on anomaly class performance, making it more reflective of anomaly detection capabilities.

In addition, the two recent metrics are chosen to evaluate TSAD performance at finer granularities: Affiliation F1 (derived from Affiliation-Precision and Affiliation-Recall) and PATE (Proximity-Aware Time series anomaly Evaluation that incorporates proximity-based weighting and buffer zones). Both metrics assess early or delayed detections. 

\section{Implementation Details} \label{sec:implementation}
All evaluations were performed on a system with an A40 GPU, 13 CPU cores, and 250 GB of RAM. We use the same hyperparameters for GenIAS across UTS and MTS datasets, as detailed in Table~\ref{tab:hyperparam}.
\begin{table}[h]
    \centering
    \small
    \caption{hyperparameter configurations used in GenIAS.}
    \vspace{-5pt}
    \begin{tabular}{ll}
        \toprule
        \textbf{Parameter} & \textbf{Value} \\
        \midrule
        \( T \) & 200 \\
        \( L \) & UTS = 50, MTS = 100 \\
        \( \sigma_{\text{prior}} \) & 0.5 \\
        \textit{lr} & initial=$10^{-4}$, with scheduled reduction\\
        \textit{epochs} & max=1000 \\
        \textit{kernel size} & 3\\
        \textit{dropout} & 0.1\\
        \textit{batch size} & 100 \\
        $\alpha$ & 1.0 \\
        $\beta$ & 0.1 \\
        $\gamma$ & UTS = 0.0, MTS = 0.01 \\
        $\zeta$ & 0.1 \\
        \bottomrule
    \end{tabular}
    \label{tab:hyperparam}
\end{table}

\section{Additional Results}
\subsection{Visualizing Compact KL Loss Effects}
Separation of reconstruction error in the input space is illustrated in Figure~\ref{fig:mse_hist}.
\begin{figure}[t]
    \centering
    \includegraphics[width=0.7\linewidth]{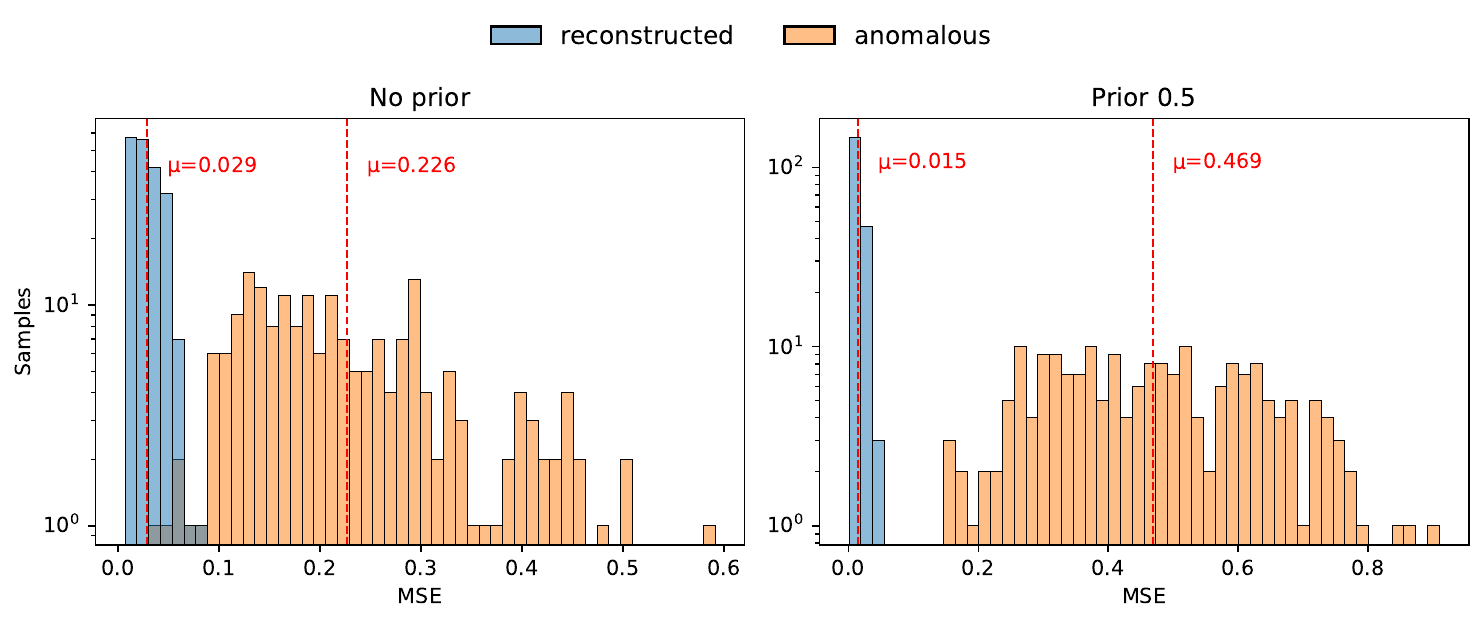}
    \caption{Separation of reconstruction error (anomaly score) in the input space with a prior of 0.5 (left) and without a prior (right) on entity C-1 from the MSL dataset. The y-axis is logarithmic.}
    \label{fig:mse_hist}
\end{figure}

\subsection{TSAD additional results}

\subsubsection{Critical Difference diagrams} \label{app:critical}
Statistical significance in critical difference diagrams was determined using a Friedman test followed by a post-hoc Wilcoxon signed-rank test with Holm correction \cite{demvsar2006statistical}. The critical difference diagrams in Figure~\ref{fig:cdc} show that GenIAS consistently ranks first across both multivariate (MTS) and univariate (UTS) time series settings for AUPR and F1 Score. In the MTS setting, GenIAS demonstrates a statistically significant improvement over other baselines, indicating its superior performance in capturing anomalies.
\begin{figure*}[t]
    \centering
    \begin{subfigure}{0.49\textwidth}
        \includegraphics[width=\textwidth]{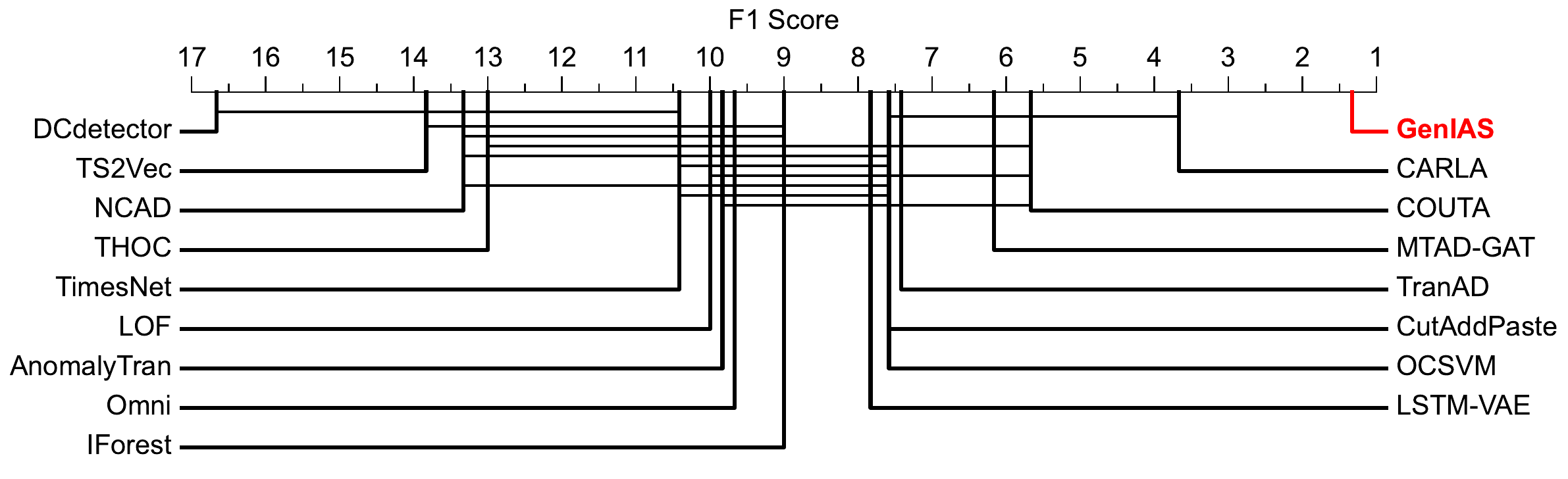}
        \label{fig:cdc-f1-mts}
        \vspace{-20pt}
        \caption{}
    \end{subfigure}
    \hfill
    \begin{subfigure}{0.49\textwidth}
        \includegraphics[width=\textwidth]{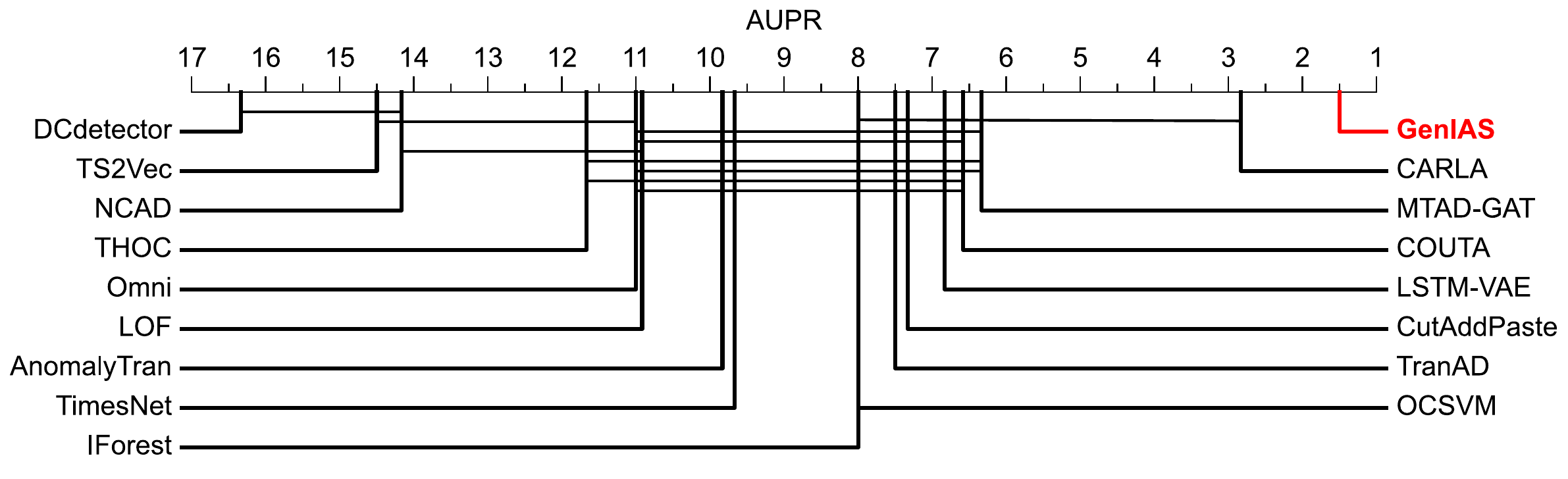}
        \vspace{-20pt}
        \caption{}
        \label{fig:cdc-pr-mts}
    \end{subfigure}
    \begin{subfigure}{0.48\textwidth}
        \includegraphics[width=\textwidth]{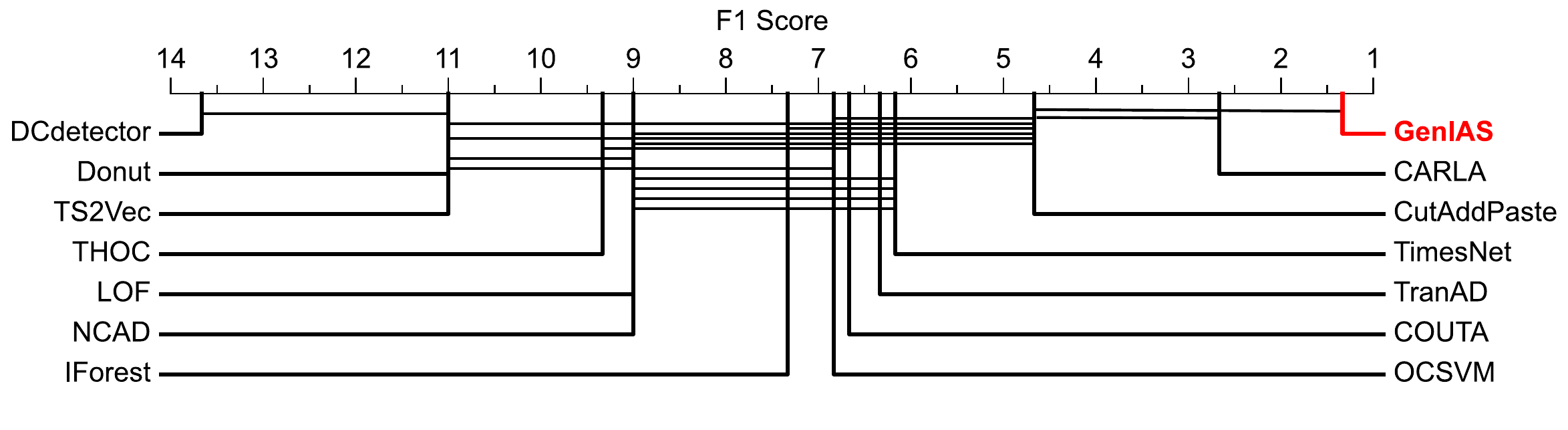}
        \label{fig:cdc-f1-uts}
        \vspace{-20pt}
        \caption{}
    \end{subfigure}
    \hfill
    \begin{subfigure}{0.48\textwidth}
        \includegraphics[width=\textwidth]{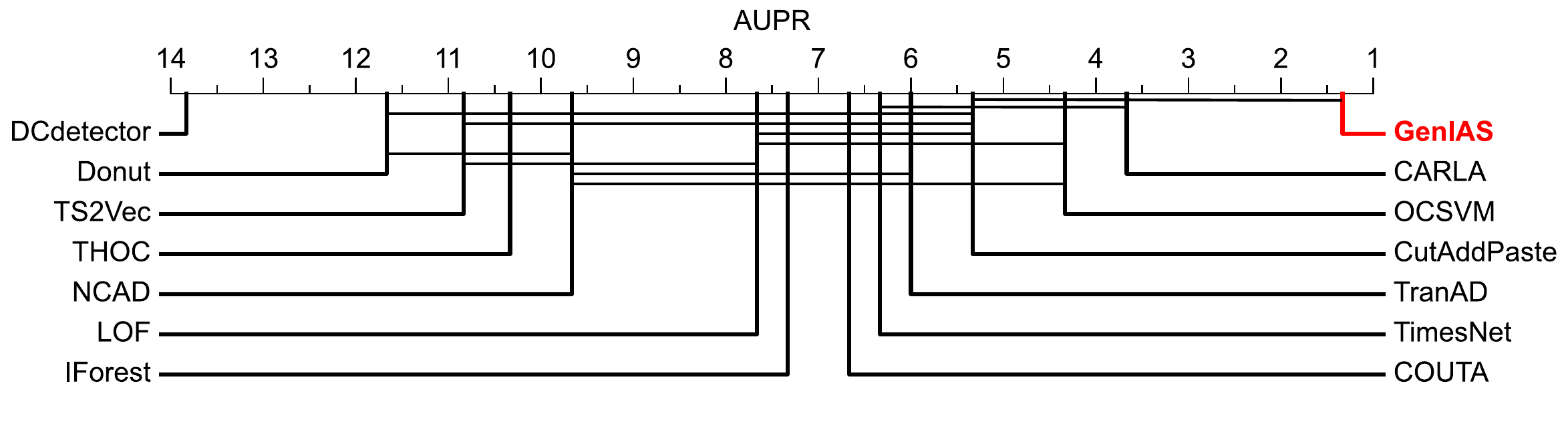}
        \label{fig:cdc-pr-uts}
        \vspace{-20pt}
        \caption{}
    \end{subfigure}
    \caption{Critical difference diagrams for F1 and AUPR: (a, b) represent multivariate models, while (c, d) showcase univariate models.}
    \label{fig:cdc}
\end{figure*}

\subsubsection{TSAD performance in the other metrics.} \label{app:othermetrics}
In Table~\ref{table:tsad_app}, we present the TSAD performance of GenIAS and the baseline methods in terms of three additional metrics: Affiliation F1 (Aff), PATE, and AUROC. In general, similar observations can be drawn as in Section~\ref{sec:tsad} - GenIAS achieves superior performance compared to the baseline methods. Specifically, for Aff and PATE, GenIAS achieves the best overall performance across both MTS and UTS benchmark datasets. In terms of AUROC, GenIAS achieves the second-best overall performance and best overall performance among neural network-based methods, while Isolation Forest, a well-known legacy shallow AD method, yields surprisingly good performance.


\begin{table*}[t]
\caption{TSAD performance of GenIAS and baseline methods in terms of Aff, PATE, and AUROC, with the best model in bold and the second-best model is underlined. U and M denote results unavailable for models limited to UTS or MTS, respectively.}
\centering
\label{table:tsad_app}
\begin{adjustbox}{width=0.87\textwidth}
\begin{tabular}{ccccccccccccccc}
\toprule
\multicolumn{1}{l}{}      & \multicolumn{1}{l}{}     & \multicolumn{1}{l}{} & \multicolumn{7}{c}{Multivariate}                                                                                     & \multicolumn{4}{c}{Univariate}                                    & \multicolumn{1}{l}{} \\ \cmidrule(lr){4-10} \cmidrule(lr){11-14} 
Metric                    & Cat.                     & Methods              & MSL            & SMAP           & SMD            & SWAT           & GECCO          & SWAN           & Avg-M          & Yahoo          & KPI            & UCR            & Avg-U          & Avg                  \\ \midrule
\multirow{18}{*}{AUPR} & \multirow{3}{*}{Classic} & OCSVM (1999)       & 0.202 & 0.256 & 0.327 & 0.696 & 0.163 & 0.608 & 0.375 & \underline{0.703}  & 0.300 & 0.028 & 0.344 & 0.365 \\ 
                       &                          & LOF (2000)         & 0.217 & 0.248 & 0.277 & 0.185 & 0.031 & 0.624 & 0.264 & 0.615  & 0.109 & 0.043 & 0.256 & 0.261 \\
                       &                          & IForest (2008)     & 0.182 & 0.190 & 0.278 &\textbf{0.739} & 0.162 & 0.739 & 0.382 & 0.614  & 0.288 & 0.021 & 0.308 & 0.357 \\ \cmidrule(lr){3-15}
                       & PU                      & LSTM-VAE (2018)    & 0.285 & 0.258 & 0.395 & 0.685 & 0.045 & 0.674 & 0.390 & M      & M     & M     & M     & M     \\ \cmidrule(lr){3-15}
                       & \multirow{9}{*}{US-WI}  & Donut (2018)       & U     & U     & U     & U     & U     & U     & U     & 0.264  & 0.078 & 0.015 & 0.119 & U    \\
                       &                          & Omni (2019)        & 0.149 & 0.115 & 0.365 & \underline{0.713} & 0.034 & 0.587 & 0.327 & M      & M     & M     & M     & M     \\
                       &                          & THOC (2020)        & 0.240 & 0.195 & 0.107 & 0.537 & 0.076 & 0.432 & 0.265 & 0.349  & 0.229 & 0.014 & 0.197 & 0.242 \\
                       &                          & MTAD-GAT(2020)     & 0.335 & 0.339 & 0.401 & 0.095 & 0.200 & 0.613 & 0.331 & M      & M     & M     & M     & M     \\
                       &                          & AnomalyTran (2021) & 0.236 & 0.264 & 0.273 & 0.680 & 0.059 & 0.568 & 0.347 & M      & M     & M     & M     & M     \\
                       &                          & TranAD (2022)      & 0.273 & 0.287 & 0.412 & 0.192 & 0.119 & 0.586 & 0.312 & 0.691  & 0.285 & 0.022 & 0.333 & 0.319 \\
                       &                          & TS2Vec (2022)      & 0.132 & 0.148 & 0.113 & 0.136 & 0.040 & 0.408 & 0.163 & 0.491  & 0.221 & 0.009 & 0.240 & 0.189 \\
                       &                          & TimesNet (2023)    & 0.283 & 0.208 & 0.385 & 0.082 & 0.189 & 0.508 & 0.276 & 0.671  & 0.237 & 0.032 & 0.313 & 0.288 \\
                       &                          & Dcdetector (2023)  & 0.129 & 0.124 & 0.043 & 0.126 & 0.011 & 0.326 & 0.127 & 0.041  & 0.018 & 0.009 & 0.023 & 0.092 \\ \cmidrule(lr){3-15}
                       & \multirow{4}{*}{US-I}                  & NCAD (2022)        & 0.146 & 0.156 & 0.096 & 0.106 & 0.135 & 0.407 & 0.174 & 0.067  & 0.089 & 0.103 & 0.086 & 0.145 \\ 
                       &                          & CutAddPaste (2024) & 0.242 & 0.322 & 0.153 & 0.582 & 0.159 & 0.808 & 0.378 & 0.116  & \textbf{0.602} & 0.188 & 0.302 & 0.352 \\
                       &                          & COUTA (2024)       & 0.247 & 0.248 & 0.400 & 0.159 & \textbf{0.264} & 0.685 & 0.334 & 0.555  & 0.282 & 0.049 & 0.295 & 0.321 \\
                       &                          & CARLA (2025)       & \underline{0.501} & \underline{0.448} & \underline{0.507} & 0.681 & 0.201 & \underline{0.814} & \underline{0.525} & 0.627  & 0.299 &\underline{0.247} & \underline{0.391} & \underline{0.481} \\ \cmidrule(lr){3-15}
                       & Ours                     & GenIAS+CARLA             & \textbf{0.529} & \textbf{0.467} & \textbf{0.512} & 0.704 & \underline{0.236} & \textbf{0.882} & \textbf{0.555} & \textbf{0.794}  & \underline{0.357} & \textbf{0.280} & \textbf{0.477} & \textbf{0.529} \\ 
\midrule
\multirow{18}{*}{Aff}     & \multirow{3}{*}{Classic} & OCSVM (1999)         & 0.742          & 0.761          & 0.733          & 0.197          & 0.425          & 0.711          & 0.595          & \textbf{0.923} & 0.761          & 0.710          & 0.798          & 0.663                \\
                          &                          & LOF (2000)           & 0.775          & 0.762          & 0.666          & 0.613          & 0.227          & 0.745          & 0.631          & 0.906          & 0.674          & 0.718          & 0.766          & 0.676                \\
                          &                          & IForest (2008)       & 0.719          & 0.728          & 0.699          & 0.535          & 0.259          & 0.824          & 0.627          & 0.907          & 0.760          & 0.686          & 0.784          & 0.680                \\ \cmidrule(lr){3-15}
                          & PU                       & LSTM-VAE (2018)      & 0.824          & 0.782          & 0.630          & 0.324          & 0.312          & 0.726          & 0.600          & M              & M              & M              & M              & M                    \\
                          & \multirow{9}{*}{US-WI}   & Donut (2018)         & U              & U              & U              & U              & U              & U              & U              & 0.777          & 0.682          & 0.685          & 0.715          & -                    \\
                          &                          & Omni (2019)          & 0.680          & 0.779          & 0.706          & 0.156          & 0.230          & 0.608          & 0.527          & M              & M              & M              & M              & M                    \\
                          &                          & THOC (2020)          & 0.778          & 0.781          & 0.677          & 0.679          & 0.321          & 0.713          & 0.658          & 0.796          & 0.676          & 0.642          & 0.705          & 0.674                \\
                          &                          & MTAD-GAT(2020)       & \underline{ 0.833}    & \underline{ 0.848}    & 0.731          & 0.355          & 0.308          & 0.745          & 0.637          & M              & M              & M              & M              & M                    \\
                          &                          & AnomalyTran (2021)   & 0.774          & 0.766          & 0.664          & 0.482          & 0.551          & 0.537          & 0.629          & M              & M              & M              & M              & M                    \\
                          &                          & TranAD (2022)        & 0.771          & 0.751          & 0.719          & 0.606          & 0.233          & 0.661          & 0.624          & 0.917          & 0.779          & 0.698          & 0.798          & 0.682                \\
                          &                          & TS2Vec (2022)        & 0.694          & 0.695          & 0.750          & \underline{ 0.700}    & 0.520          & 0.728          & 0.681          & 0.867          & 0.818          & 0.671          & 0.785          & 0.716                \\
                          &                          & TimesNet (2023)      & 0.769          & 0.710          & \textbf{0.795} & 0.693          & \underline{ 0.829}    & 0.710          & 0.751          & 0.917          & \underline{ 0.844}    & 0.712          & 0.824          & 0.775                \\
                          &                          & Dcdetector (2023)    & 0.706          & 0.680          & 0.528          & 0.693          & 0.671          & 0.710          & 0.665          & 0.737          & 0.666          & 0.671          & 0.691          & 0.674                \\ \cmidrule(lr){3-15}
                          & \multirow{4}{*}{US-I}    & NCAD (2022)          & 0.685          & 0.664          & 0.732          & 0.693          & \textbf{0.887} & 0.710          & 0.729          & 0.645          & 0.753          & 0.747          & 0.715          & 0.724                \\
                          &                          & CutAddPaste (2024)   & 0.762          & 0.753          & 0.693          & \textbf{0.764} & 0.659          & 0.875          & 0.751          & 0.832          & 0.818          & \textbf{0.820} & 0.823          & 0.775                \\
                          &                          & COUTA (2024)         & 0.785          & 0.762          & \underline{ 0.764}    & 0.645          & 0.592          & 0.731          & 0.713          & 0.906          & \textbf{0.851} & 0.730          & \underline{ 0.829}    & 0.752                \\
                          &                          & CARLA (2025)         & 0.816          & 0.808          & 0.733          & 0.592          & 0.756          & \underline{ 0.932}    & \underline{ 0.773}    & 0.874          & 0.701          & 0.805          & 0.793          & \underline{ 0.780}          \\ \cmidrule(lr){3-15}
                          & Ours                     & GenIAS+CARLA               & \textbf{0.851} & \textbf{0.856} & 0.731          & 0.596          & 0.763          & \textbf{0.935} & \textbf{0.789} & \underline{ 0.921}    & 0.788          & \underline{ 0.814}    & \textbf{0.841} & \textbf{0.806}       \\ \cmidrule(lr){1-15}
\multirow{18}{*}{PATE}    & \multirow{3}{*}{Classic} & OCSVM (1999)         & 0.267          & 0.270          & 0.363          & 0.698          & 0.191          & 0.704          & 0.416          & \underline{ 0.780}    & 0.343          & 0.033          & 0.385          & 0.405                \\
                          &                          & LOF (2000)           & 0.242          & 0.251          & 0.319          & 0.187          & 0.042          & 0.706          & 0.291          & 0.690          & 0.159          & 0.056          & 0.302          & 0.295                \\
                          &                          & IForest (2008)       & 0.195          & 0.197          & 0.306          & \textbf{0.743} & 0.191          & 0.799          & 0.405          & 0.735          & 0.325          & 0.027          & 0.362          & 0.391                \\ \cmidrule(lr){3-15}
                          & PU                       & LSTM-VAE (2018)      & 0.318          & 0.278          & 0.436          & 0.693          & 0.061          & 0.753          & 0.423          & M              & M              & M              & M              & M                    \\ \cmidrule(lr){3-15}
                          & \multirow{9}{*}{US-WI}   & Donut (2018)         & U              & U              & U              & U              & U              & U              & U              & 0.358          & 0.120          & 0.018          & 0.165          & -                    \\
                          &                          & Omni (2019)          & 0.143          & 0.162          & 0.422          & 0.702          & 0.076          & 0.709          & 0.369          & M              & M              & M              & M              & M                    \\
                          &                          & THOC (2020)          & 0.307          & 0.218          & 0.147          & 0.508          & 0.104          & 0.563          & 0.308          & 0.488          & 0.317          & 0.016          & 0.274          & 0.296                \\
                          &                          & MTAD-GAT(2020)       & 0.391          & 0.371          & 0.445          & 0.115          & 0.247          & 0.707          & 0.379          & M              & M              & M              & M              & M                    \\
                          &                          & AnomalyTran (2021)   & 0.267          & 0.283          & 0.332          & 0.683          & 0.112          & 0.685          & 0.394          & M              & M              & M              & M              & M                    \\
                          &                          & TranAD (2022)        & 0.238          & 0.279          & 0.458          & 0.192          & 0.141          & 0.688          & 0.333          & 0.777          & 0.408          & 0.025          & 0.403          & 0.356                \\
                          &                          & TS2Vec (2022)        & 0.143          & 0.150          & 0.137          & 0.147          & 0.055          & 0.530          & 0.194          & 0.636          & 0.247          & 0.144          & 0.342          & 0.243                \\
                          &                          & TimesNet (2023)      & 0.282          & 0.207          & 0.452          & 0.083          & 0.240          & 0.636          & 0.317          & 0.760          & 0.355          & 0.042          & 0.386          & 0.340                \\
                          &                          & Dcdetector (2023)    & 0.176          & 0.135          & 0.097          & 0.124          & 0.025          & 0.455          & 0.169          & 0.184          & 0.052          & 0.025          & 0.087          & 0.141                \\
                          & \multirow{4}{*}{US-I}    & NCAD (2022)          & 0.157          & 0.161          & 0.129          & 0.107          & 0.272          & 0.556          & 0.230          & 0.145          & 0.173          & 0.130          & 0.149          & 0.203                \\
                          &                          & CutAddPaste (2024)   & 0.269          & 0.347          & 0.200          & 0.613          & 0.252          & 0.842          & 0.421          & 0.237          & \textbf{0.741} & 0.262          & 0.413          & 0.418                \\
                          &                          & COUTA (2024)         & 0.270          & 0.244          & 0.447          & 0.159          & \textbf{0.285} & 0.756          & 0.360          & 0.684          & 0.335          & 0.054          & 0.358          & 0.359                \\
                          &                          & CARLA (2025)         & \underline{ 0.493}    & \underline{ 0.413}    & \underline{ 0.488}    & 0.655          & 0.193          & \underline{ 0.911}    & \underline{ 0.526}    & 0.695          & 0.378          & \underline{ 0.301}    & \underline{ 0.458}    & \underline{ 0.503}          \\ \cmidrule(lr){3-15}
                          & Ours                     & GenIAS+CARLA               & \textbf{0.571} & \textbf{0.516} & \textbf{0.553} & \underline{ 0.718}    & \underline{ 0.274}    & \textbf{0.941} & \textbf{0.596} & \textbf{0.838} & \underline{ 0.456}    & \textbf{0.331} & \textbf{0.542} & \textbf{0.578}       \\ \cmidrule(lr){1-15}
\multirow{18}{*}{AUC-ROC} & \multirow{3}{*}{Classic} & OCSVM (1999)         & 0.605          & 0.634          & 0.762          & 0.799          & 0.446          & 0.714          & 0.660          & \textbf{0.918} & 0.728          & 0.567          & 0.738          & 0.686                \\
                          &                          & LOF (2000)           & 0.610          & 0.584          & 0.708          & 0.660          & 0.370          & 0.789          & 0.620          & 0.890          & 0.604          & 0.575          & 0.690          & 0.643                \\
                          &                          & IForest (2008)       & 0.571          & 0.579          & 0.768          & \textbf{0.842} & \textbf{0.927} & \textbf{0.862} & \textbf{0.758} & \underline{ 0.915}    & 0.749          & 0.570          & 0.745          & \textbf{0.754}       \\ \cmidrule(lr){3-15}
                          & PU                       & LSTM-VAE (2018)      & 0.618          & 0.650          & 0.771          & 0.806          & 0.506          & 0.763          & 0.686          & M              & M              & M              & M              & M                    \\ \cmidrule(lr){3-15}
                          & \multirow{9}{*}{US-WI}   & Donut (2018)         & U              & U              & U              & U              & U              & U              & U              & 0.676          & 0.640          & 0.498          & 0.605          & -                    \\
                          &                          & Omni (2019)          & 0.387          & 0.570          & 0.794          & 0.811          & 0.430          & 0.696          & 0.615          & M              & M              & M              & M              & M                    \\
                          &                          & THOC (2020)          & 0.654          & 0.604          & 0.678          & 0.318          & 0.395          & 0.578          & 0.538          & 0.873          & \underline{ 0.799}    & 0.558          & 0.743          & 0.606                \\
                          &                          & MTAD-GAT(2020)       & 0.674          & 0.641          & \underline{ 0.812}    & \underline{ 0.822}    & 0.590          & 0.749          & \underline{ 0.715}    & M              & M              & M              & M              & M                    \\
                          &                          & AnomalyTran (2021)   & 0.641          & 0.653          & 0.764          & 0.809          & 0.698          & 0.678          & 0.707          & M              & M              & M              & M              & M                    \\
                          &                          & TranAD (2022)        & 0.632          & 0.645          & 0.808          & 0.677          & 0.519          & 0.677          & 0.660          & 0.889          & 0.763          & 0.515          & 0.722          & 0.681                \\
                          &                          & TS2Vec (2022)        & 0.503          & 0.517          & 0.496          & 0.548          & 0.501          & 0.573          & 0.523          & 0.698          & 0.567          & 0.501          & 0.589          & 0.545                \\
                          &                          & TimesNet (2023)      & 0.664          & 0.578          & \textbf{0.837} & 0.243          & \underline{ 0.924}    & 0.598          & 0.641          & 0.892          & 0.750          & 0.597          & 0.746          & 0.676                \\
                          &                          & Dcdetector (2023)    & 0.535          & 0.519          & 0.579          & 0.504          & 0.504          & 0.499          & 0.523          & 0.603          & 0.504          & 0.509          & 0.539          & 0.528                \\
                          & \multirow{4}{*}{US-I}    & NCAD (2022)          & 0.491          & 0.493          & 0.572          & 0.365          & 0.692          & 0.592          & 0.534          & 0.567          & 0.671          & 0.609          & 0.616          & 0.561                \\
                          &                          & CutAddPaste (2024)   & 0.673          & 0.631          & 0.631          & 0.777          & 0.789          & 0.464          & 0.661          & 0.684          & \textbf{0.810} & 0.617          & 0.704          & 0.675                \\
                          &                          & COUTA (2024)         & 0.661          & 0.606          & 0.811          & 0.643          & 0.660          & \underline{ 0.802}    & 0.697          & 0.911          & 0.787          & 0.559          & \textbf{0.752} & 0.716                \\
                          &                          & CARLA (2025)         & \underline{ 0.694}    & \underline{ 0.677}    & 0.657          & 0.783          & 0.696          & 0.510          & 0.670          & 0.543          & 0.604          & \underline{ 0.735}    & 0.627          & 0.655                \\  \cmidrule(lr){3-15}
                          & Ours                     & GenIAS+CARLA               & \textbf{0.745} & \textbf{0.718} & 0.633          & 0.786          & 0.811          & 0.536          & 0.705          & 0.778          & 0.691          & \textbf{0.778} & \underline{ 0.749}    & \underline{ 0.720}     \\ \bottomrule     
\end{tabular}
\end{adjustbox}
\end{table*}

\begin{figure*}[t]
\centering
    \begin{subfigure}{0.8\textwidth}
        \includegraphics[width=\textwidth]{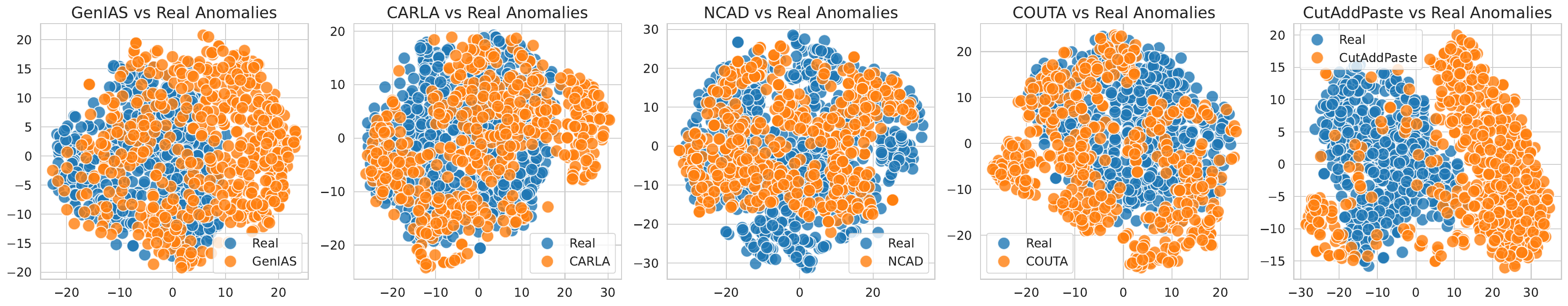}
        \label{fig:smap_tsne}
        \vspace{-15pt}
        \caption{SMAP - Entity E5}
    \end{subfigure}
    \par\vspace{5pt}
    \begin{subfigure}{0.8\textwidth}
        \includegraphics[width=\textwidth]{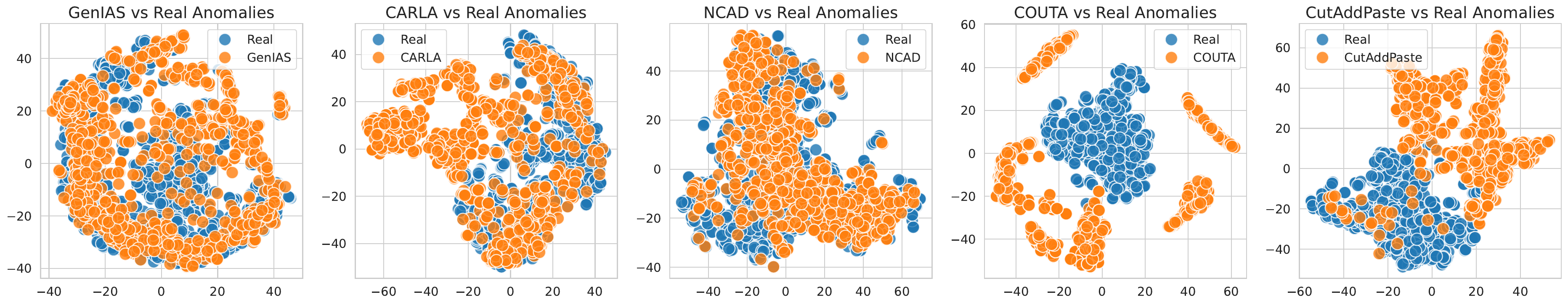}
        \label{fig:kpi_tsne}
        \vspace{-15pt}
        \caption{KPI - Entity c69a50cf-ee03-3bd7-831e-407d36c7ee91}
        \vspace{5pt}
    \end{subfigure}
    \par
    \begin{subfigure}{0.8\textwidth}
        \includegraphics[width=\textwidth]{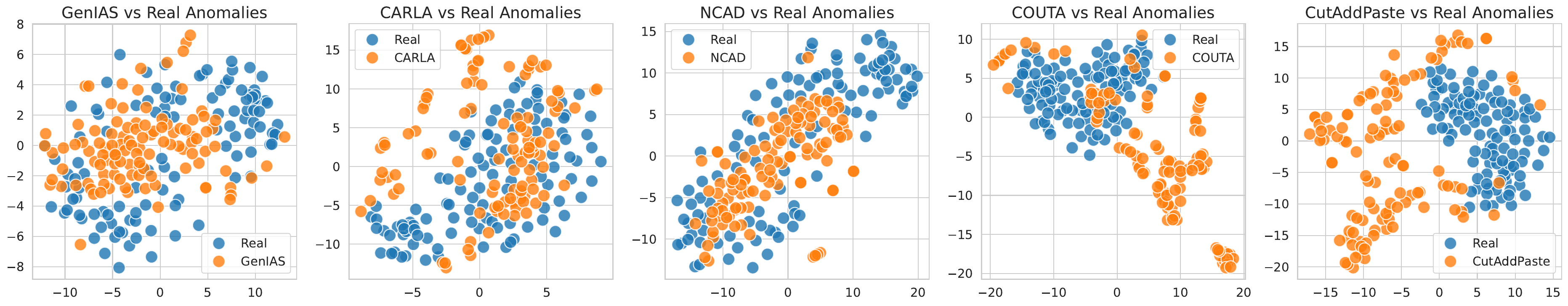}
        \label{fig:smd_tsne}
        \vspace{-15pt}
        \caption{SMD - Entity machine-2-3}
        \vspace{5pt}
    \end{subfigure}
    \par
    \begin{subfigure}{0.8\textwidth}
        \includegraphics[width=\textwidth]{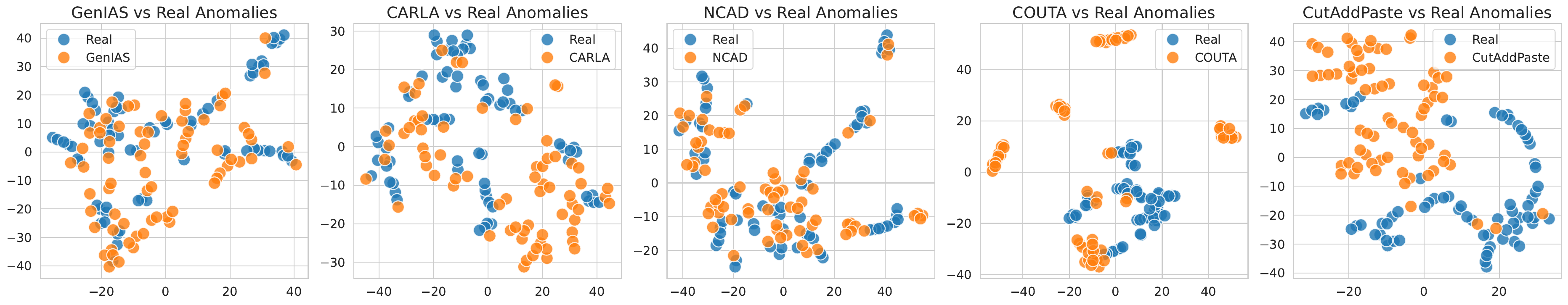}
        \label{fig:ucr_tsne}
        \vspace{-15pt}
        \caption{UCR - Entity 222\_UCR\_Anomaly\_mit14046longtermecg\_56123\_91200\_91700}
        \vspace{5pt}
    \end{subfigure}
    \vspace{-5pt}
    \caption{Each plot visualizes a different time series (TS) injection or generation method using t-SNE. Blue markers represent the original anomalous TS, while orange markers represent the injected or generated anomalous TS.}
    \vspace{-5pt}
    \label{fig:ext_tsne}
\end{figure*}

\subsubsection{Additional Visualizations}
\label{sec:apd:vis}
To further illustrate the realistic anomaly generation of GenIAS, we present three additional visualizations comparing generated and real anomalies from both GenIAS and the US-i baselines in Figure \ref{fig:ext_tsne}. The observations are consistent with Section~\ref{sec:visual} in the main text, demonstrating that GenIAS produces comparatively more realistic anomalies with better smoothness in the latent space.

\end{document}